\pgfplotsset{compat=1.15}
\DeclareMathOperator*{\jac}{\operatorname{Jac}}
\newcommand{\R}{{\cal R}}
\newcommand{\Q}{{\cal Q}}
\newcommand{\reals}{\mathbb{R}}
\newtheorem{assumption}{Assumption}
\newtheorem{proposition}{Proposition}
\begin{document}

\title{Direct Estimation of Appearance Models for
  Segmentation\thanks{This material is based upon work supported by
    the National Science Foundation under Grant No. DMS-1439786 while
    the authors were in residence at the Institute for Computational
    and Experimental Research in Mathematics in Providence, RI, during
    the Spring 2019 semester.}}

\author{Jeova F. S. Rocha Neto\thanks{School of Engineering, Brown
    University, Providence, RI, USA} \and Pedro
  Felzenszwalb\footnotemark[2] \and Marilyn
  Vazquez\thanks{Mathematical Bioscience Institute, Ohio State
    University, Columbus, OH, USA}}

\maketitle

\begin{abstract}
Image segmentation algorithms often depend on appearance models that
characterize the distribution of pixel values in different image
regions.  We describe a new approach for estimating appearance models
directly from an image, without explicit consideration of the pixels
that make up each region.  Our approach is based on novel algebraic
expressions that relate local image statistics to the appearance of
spatially coherent regions.  We describe two algorithms that can use
the aforementioned algebraic expressions to estimate appearance models
directly from an image.  The first algorithm solves a system of linear
and quadratic equations using a least squares formulation.  The second
algorithm is a spectral method based on an eigenvector computation.
We present experimental results that demonstrate the proposed methods
work well in practice and lead to effective image segmentation
algorithms.
\end{abstract}

Keywords:
  Image segmentation, Mixture models, Markov Random Fields

MSC:
  68U10, 62M05, 62H30, 65C20

\section{Introduction}

Image segmentation is a fundamental problem in computer vision and
image processing.  The goal of segmentation is to partition an image
into regions corresponding to different objects or materials that are
visible in a scene.  There are numerous applications of automatic
segmentation methods including in medical image analysis, remote
sensing, industrial inspection, object recognition and interactive
image editing.  Therefore, advances in image segmentation can have
significant impact in many applications.

Segmentation algorithms often rely on some type of appearance modeling
to be able to classify each pixel in the image into different regions.
Many segmentation algorithms are based on minimization of functionals
that encourage spatial coherence of regions and a fit of the pixels in
each region to an appearance model.

In some settings such as medical image analysis or remote sensing the
appearance of different objects of interest may be known in advance.
On the other hand, in many applications the appearance of the objects
or materials in a scene need to determined during the segmentation
process.

When appearance models are not known in advance, the challenge of
image segmentation is that it is essentially a ``chicken-and-egg''
problem.  If we had appearance models we could segment the image.  On
the other hand, if we had a segmentation we could estimate the
appearance of each region.  Some methods such as
\cite{rother2004grabcut} and \cite{tang2014pseudo} alternate between
estimating appearance models and estimating a segmentation in an
iterative fashion.  Similarly, variational methods such as
\cite{chan2001active} and \cite{ni2009local} repeatedly evolve a
segmentation and the appearance model of each region.  Less
frequently, methods such as \cite{vicente2009joint} and
\cite{tang2013grabcut} tackle the segmentation problem by making the
dependency on appearance models implicit.  Other methods such as the
graph-based approaches in \cite{shi2000normalized} and
\cite{felzenszwalb2004efficient} avoid using appearance models and
instead use pairwise similarity to cluster pixels.

In this paper we describe a novel approach for estimating appearance
models directly from an image, without explicit consideration of the
pixels that make up each region.  The approach relies on the spatial
coherence of regions but does not require solving for a segmentation
to determine the appearance of each region.  Instead we derive a set
of algebraic expressions that can be used to solve for the appearance
models of each region using local statistics that can be easily
estimated directly from an image.

We focus specifically on the problem of \emph{binary} segmentation
where the goal is to partition an image into two regions (often the
foreground and background).  Our results allow for the development of
new segmentation algorithms that use a two-step process.  First we
determine appearance models without segmenting the image.  We can then
segment the image using any available method that requires appearance
models to be fixed in advance.  In practice we propose to use the
estimated appearance models to define a Markov Random Field
\cite{besag1986statistical}, and use efficient graph cut algorithms to
segment the image \cite{greig1989exact,boykov1999fast}.  This two-step
approach leads to efficient methods for unsupervised segmentation.

Many image segmentation algorithms pre-process an image using the
response of Gabor filters to define features
\cite{jain1991unsupervised, mccann2014images, yuan2015factorization}.
Although very useful for capturing texture information, the use of
filter banks requires significant processing and memory
overhead. Moreover, the use of filter responses can lead to inaccurate
localization of region boundaries, due to the non-trivial spatial
extent of texture filters.

Recent segmentation methods have also use machine learning techniques
to define local features or in complete end-to-end systems (see,
e.g.\ \cite{minaee2021image}).  Machine learning methods typically
require labeled datasets for training and often do not generalize
across different imaging domains.  The resulting systems can also
suffer from limited interpretability.  In contrast, our model based
approach is easily interpretable and can be applied to different types
of images without requiring any training data.

The algorithms we describe in this paper do not rely on filter
responses or learned features.  Instead we work directly with raw
pixel values.  Our methods estimate non-parametric models that
represent the appearance of regions in terms of distributions over
pixel values.  This makes it possible to accurately localize the
boundaries between regions with complex appearances.

The remainder of the paper is organized as follows. In Section 2 we
describe how we represent the appearance of regions using
non-parametric models that capture the distribution of
pixel values within a region.  In Section 3 we discuss how local image
statistics can be related to the appearance models of images with two
regions using a set of algebraic constraints.  These algebraic
constraints lead to two different methods for estimating appearance
models described in Section 4.  In Section 5, we show experimental
results and evaluate both the accuracy of the estimated appearance
models and the segmentations obtained using these models.  Finally,
Section 6 concludes the paper with a summary and brief discussion.

\section{Appearance Models for Segmentation}

Let $I : \Omega \rightarrow L$ be an image, where $\Omega$ is a set of
pixel locations and $L$ is a finite set of pixel values.  For an $n$
by $m$ graylevel image we have $\Omega = \{1,\ldots,n\} \times
\{1,\ldots,m\}$ while $L=\{0,\ldots,255\}$.  In Section
\ref{sec:realimgs}, we will discuss how to generalize our methods to
RGB and vector valued images.  We use $I(x)$ to denote the value of a
pixel $x \in \Omega$.  A region $\R$ is a subset of the pixels in
$\Omega$.

We define an \emph{appearance model} $\theta \in \reals^L$ to be a
distribution over $L$.  An appearance model for a region $\R$
specifies the typical values for the pixels in $\R$.  Note, however,
that an appearance model $\theta$ does not fully specify the joint
distribution of the pixels in a region, only the probabilities of
observing different pixel values.  We do not assume the pixel values
in each region are independent.  Therefore the appearance models
considered here only define a coarse representation for the appearance
of a region.  For example, if we permute the pixels within a region
the appearance model would remain the same.

We assume the image domain $\Omega$ can be divided into two regions 
$\R_0$ and $\R_1$ and use $S : \Omega \rightarrow \{0,1\}$ to denote 
a binary image specifying an assignment of image pixels into regions, 
or a \emph{segmentation}.  Let $\theta_0$ and $\theta_1$
be appearance models for $\R_0$ and $\R_1$ respectively.

As discussed in the introduction, one of the fundamental challenges of
unsupervised image segmentation is that it is essentially a
``chicken-and-egg'' problem.  If we have appearance models $\theta_0$
and $\theta_1$ we can partition the image into regions using several
existing approaches.  On the other hand, if we have a partition of an
image into regions $\R_0$ and $\R_1$ we can estimate appearance models
by computing normalized histograms of the pixel values in each region.

The main contribution of our work is a method for estimating the
appearance models $\theta_0$ and $\theta_0$ \emph{directly} from an
image, without explicit consideration of the regions $\R_0$ and
$\R_1$.  

Once appearance models are estimated there are many methods that can
be used to partition an image.  In the simplest setting we can
classify each pixel independently using a likelihood ratio defined by
$\theta_0$ and $\theta_1$.  Alternatively, methods based on Markov
Random Fields (\cite{besag1986statistical}) segment an image by
minimizing an energy that combines the appearance models with a
boundary regularization term,
\begin{equation}\label{eq:E_seg}
    E(S | \lambda, \theta_0, \theta_1) =
    -\sum_{x \in \Omega} \ln \theta_{S(x)}(I(x)) + \lambda \sum_{x,y \text{ neighbors}} |S(x)-S(y)|.
\end{equation}
In this case the minimum energy segmentation corresponds to a MAP
estimate of $S$, where the boundary regularization defines a prior
over segmentations.  Importantly, the energy function $E$ can be
efficiently minimized by the computation of a minimum cut in a graph
(\cite{greig1989exact}).

\section{Statistical Model}
\label{sec:model}

In order to address the appearance model estimation problem, we treat
the image $I : \Omega \rightarrow L$ as a realization of a random field.
Let $\R_0$ and $\R_1$ be a partition of $\Omega$ into two regions.
Let $\theta_0$ and $\theta_1$ be appearance models associated with
$\R_0$ and $\R_1$ respectively.

Our approach relies on two key assumptions.  The first assumption is
that regions have homogeneous appearance in the following sense.

\begin{assumption}[Homogeneity]
  \label{as:homogeneity}
  The probability that a pixel $x$ takes a particular value depends
  only on the region the pixel belongs to,
  $$P(I(x)=i \, | \, x \in \R_s) = \theta_s(i).$$ 
\end{assumption}

Note that this assumption does not specify a full generative model for
the image.  We assume the pixels in each region have the same marginal
distribution, but their joint distribution could involve dependencies,
as we see for example in images with textures.  Similar assumptions
have been used in other approaches for image segmentation, including
several methods for unsupervised texture segmentation
\cite{ni2009local, yuan2015factorization, mccann2014images}.

In general we expect the values of nearby pixels to be
dependent.  For example, neighboring pixels that are in the same
region almost always have similar values.  
The second assumption we make captures the idea that 
sufficiently far away pixels are independent.  

\begin{assumption}[Independence at a distance]
  \label{as:independence}
  If $x$ and $y$ are two pixels at a distance $r$ from each other,
  $$P(I(x)=i,I(y)=j \, | \, x \in \R_s, y \in \R_t) =
  P(I(x) = i \, | \, x \in \R_s) P(I(y) = j \, | \, y \in \R_t).$$
\end{assumption}

To estimate $\theta_0$ and $\theta_1$ we consider two
distributions $\alpha$ and $\beta$
that can be directly estimated from an observed image.
We will relate $\theta_0$ and $\theta_1$ to $\alpha$ and $\beta$ using
the assumptions above.

Let $x, y \in \Omega$ be a pair of pixels at a distance $r$ 
from each other, selected uniformly at random.  
Since the pixels are in a discrete
grid we use the L1 norm to measure the distance between them.
Let $\alpha \in \reals^L$ be a distribution over $L$ where $\alpha(i)$
is the probability that pixel $x$ has value $i$.  Let $\beta \in
\reals^{L \times L}$ be a distribution over $L \times L$ where
$\beta(i,j)$ is the probability that pixel $x$ has value $i$ and pixel $y$
has value $j$.

$$Q = \{ (x,y) \in \Omega^2 \,|\, ||x-y|| = r\}.$$
$$(x,y) \sim \text{Uniform}(Q).$$
$$\alpha(i) = P(I(x) = i).$$
$$\beta(i,j) = P(I(x) = i, I(y) = j).$$

Note that we can easily estimate $\alpha$ and $\beta$ from an observed
image.  We simply enumerate all pairs of pixels at distance $r$ from
each other and count the number of times we observe pixels with
particular values.  If the image is large we can also estimate the two
distributions using a random sample of pairs of pixels at distance
$r$.  We use $\hat{\alpha}$ and $\hat{\beta}$ to denote the estimates
of $\alpha$ and $\beta$ computed from an observed image.
$$\hat{\alpha}(i) = \frac{1}{|Q|}\sum_{(x,y) \in Q} \mathbbm{1}(I(x) = i).$$
$$\hat{\beta}(i,j) = \frac{1}{|Q|}\sum_{(x,y) \in Q} \mathbbm{1}(I(x) = i) \mathbbm{1}(I(y) = j).$$

\begin{figure}[t]
    \centering
    \begin{tikzpicture}
        \begin{axis}[
            scale only axis,
            height=3cm,
            width=.9\textwidth,
            xlabel={$r$},
            ylabel={$d_{B}( \hat{\beta},\hat{\alpha}\hat{\alpha}^{\top})$},
            yticklabel style = {font=\footnotesize},
            xticklabel style = {font=\footnotesize},
            ylabel near ticks,
            xlabel near ticks,
            xmin=0,xmax=50,
            ]
        \addplot[mark=none, black, thick] table[x index=0,y index=1] {data/beta_graph_one_tex_new.dat};
        \end{axis}
    \end{tikzpicture}
    \caption{Evaluating independence at a distance for images with a
      single Brodatz texture (see Figure~\ref{fig:brodatz_patt} for
      examples of the images in this dataset).  We show the average
      Bhattacharyya distance between $\hat{\beta}$ and
      $\hat{\alpha}\hat{\alpha}^{\top}$ as a function of $r$.}
    \label{fig:beta_graph_one_tex}
\end{figure}
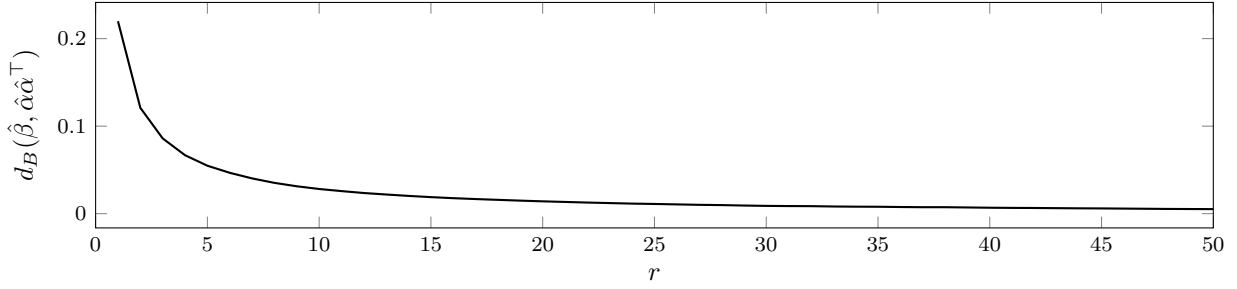

For images with a single region Assumption~\ref{as:independence} leads
to $\beta = \alpha \alpha^{\top}$.
Figure~\ref{fig:beta_graph_one_tex} evaluates this identity in
textured images from the Brodatz dataset \cite{brodatz1966textures}.
See Figure \ref{fig:brodatz_patt} for examples of the images in the
Brodatz dataset.  In this case each image has a single textured region
and we compare $\hat{\beta}$ to $\hat{\alpha} \hat{\alpha}^{\top}$ for
different values of $r$ using the Bhattacharyya distance (see
Section~\ref{sec:eval}).  When $r$ is small we see that the two
distributions, $\hat{\beta}$ and $\hat{\alpha} \hat{\alpha}^{\top}$,
are quite different, because nearby pixels are not independent.  As we
increase $r$ we see that $\hat{\beta}$ is close to $\hat{\alpha}
\hat{\alpha}^{\top}$, suggesting that pixels that are relatively far
from each other are independent.

Let $w_0 = P(x \in \R_0)$ and $w_1 = P(x \in \R_1)$.  Now consider the
probability that $x$ and $y$ are in different regions.
Let $\epsilon = P(x \in \R_0, y \in \R_1) = P(x
\in \R_1, y \in R_0)$.  The probability that $x$ and $y$ are in
different regions is $2\epsilon.$

The following proposition derives a set of linear and quadratic
algebraic constraints
between $\alpha$, $\beta$, $w_0$, $w_1$, $\theta_0$, $\theta_1$ and
$\epsilon$.  These constraints will enable us to estimate $\theta_0$
and $\theta_1$ without explicit consideration of $\R_0$ and $\R_1$.

\begin{proposition}
\label{prop:constraints}
Under Assumption~\ref{as:homogeneity} and Assumption~\ref{as:independence} we have:
\begin{equation}
    \alpha = w_0 \theta_0 + w_1 \theta_1.
\label{eq:alpha}
\end{equation}
\begin{equation}
\beta = (w_0-\epsilon) \theta_0 \theta_0^{\top} + (w_1-\epsilon) \theta_1 \theta_1^{\top} + \epsilon \theta_0 \theta_1^{\top} + \epsilon \theta_1 \theta_0^{\top}.
\label{eq:beta}
\end{equation}
\end{proposition}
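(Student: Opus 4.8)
The plan is to derive each identity by the law of total probability, conditioning on the region memberships of the randomly chosen pixels and then invoking Assumptions~\ref{as:homogeneity} and~\ref{as:independence} to evaluate the resulting conditional probabilities. Both equations are really just a partition of the sample space into region-membership events followed by a substitution of the two assumptions, so no deep machinery is needed.

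For \eqref{eq:alpha} I would fix a value $i \in L$ and condition on whether the uniformly chosen pixel $x$ lies in $\R_0$ or $\R_1$:
$$\alpha(i) = \sum_{s \in \{0,1\}} P(I(x)=i \mid x \in \R_s)\, P(x \in \R_s).$$
Since $x$ is drawn uniformly from $\Omega$, we have $P(x \in \R_s) = |\R_s|/|\Omega| = w_s$, and Assumption~\ref{as:homogeneity} gives $P(I(x)=i \mid x \in \R_s) = \theta_s(i)$. Substituting yields $\alpha(i) = w_0\theta_0(i) + w_1\theta_1(i)$, which is \eqref{eq:alpha} in vector form.

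For \eqref{eq:beta} I would fix $i,j \in L$ and condition on the joint region membership of the pair $(x,y)$ at distance $r$, expanding over the four cases $(s,t) \in \{0,1\}^2$:
$$\beta(i,j) = \sum_{s,t} P(I(x)=i, I(y)=j \mid x \in \R_s, y \in \R_t)\, P(x \in \R_s, y \in \R_t).$$
Assumption~\ref{as:independence} factors each conditional joint as $P(I(x)=i \mid x\in\R_s)\,P(I(y)=j\mid y\in\R_t)$, and Assumption~\ref{as:homogeneity} identifies these factors as $\theta_s(i)\,\theta_t(j)$. It then remains to evaluate the four pair-membership probabilities. By the definition of $\epsilon_r$ we have $P(x\in\R_0,y\in\R_1)=P(x\in\R_1,y\in\R_0)=\epsilon_r$, and using that the marginal of $x$ under the pair-selection process is again $w_s$, the diagonal terms follow from $P(x\in\R_s,y\in\R_s) = P(x\in\R_s) - P(x\in\R_s,y\in\R_{1-s}) = w_s - \epsilon_r$. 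Collecting the four products gives exactly the four outer products in \eqref{eq:beta}.

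The two applications of total probability and the substitution of the assumptions are routine. The one place that deserves care, and the main obstacle, is the accounting of the pair-membership probabilities: verifying that the marginal probability $P(x\in\R_s)$ still equals $w_s$ even under the process that samples pairs at distance $r$, which requires neglecting the boundary effect that interior pixels have more neighbors at distance $r$ than pixels near the edge of $\Omega$. A useful consistency check is that the four probabilities sum to $(w_0-\epsilon_r)+(w_1-\epsilon_r)+\epsilon_r+\epsilon_r = w_0+w_1 = 1$. Once the diagonal entries are resolved to $w_s - \epsilon_r$, both identities drop out immediately.
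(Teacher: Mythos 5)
Your proposal is correct and follows essentially the same route as the paper: the law of total probability over region-membership events, substitution of Assumptions~\ref{as:homogeneity} and~\ref{as:independence}, and the same marginalization identity $P(x\in\R_s,y\in\R_s)=P(x\in\R_s)-P(x\in\R_s,y\in\R_{1-s})=w_s-\epsilon_r$ for the diagonal pair probabilities. Your added remarks (the sum-to-one consistency check, and the observation that identifying the pair-process marginal of $x$ with $w_s$ implicitly neglects boundary effects) are sound refinements of a subtlety the paper also passes over silently, not a different argument.
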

\begin{proof}
The result follows from the law of total probability.
For (\ref{eq:alpha}),
\begin{align*}
    \alpha(i) & = P(I(x) = i) \\
    & = P(x \in \R_0) P(I(x) = i \,|\, x \in \R_0) + P(x \in \R_1) P(I(x) = i \,|\, x \in \R_1)\\
    & = w_0 \theta_0(i) + w_1 \theta_1(i).
\end{align*}

For (\ref{eq:beta}), first note that $$P(x \in \R_0) = P(x \in
\R_0, y \in \R_0) + P(x \in \R_0, y \in \R_1).$$ Therefore $P(x \in
\R_0, y \in \R_0) = w_0 - \epsilon$.  Similarly $P(x \in \R_1, y \in
\R_1) = w_1 - \epsilon$.  Now,
\begin{align*}
    \beta(i,j) & =  P(I(x) = i,I(y)=j) \\
    & = \begin{aligned}[t]
    & P(x \in \R_0, y \in \R_0) P(I(x) = i, I(y) = j \,|\, x \in \R_0, y \in \R_0) \,+ \\
     & P(x \in \R_1, y \in \R_1) P(I(x) = i, I(y) = j \,|\, x \in \R_1, y \in \R_1) \,+ \\
     & P(x \in \R_0, y \in \R_1) P(I(x) = i, I(y) = j \,|\, x \in \R_0, y \in \R_1) \,+ \\
     & P(x \in \R_1, y \in \R_0) P(I(x) = i, I(y) = j \,|\, x \in \R_1, y \in \R_0) 
     \end{aligned} \\
    & =  (w_0-\epsilon) \theta_0(i) \theta_0(j) 
    + (w_1-\epsilon) \theta_1(i) \theta_1(j) 
    + \epsilon \theta_0(i) \theta_1(j) 
    + \epsilon \theta_1(i) \theta_0(j).
\end{align*}

\end{proof}

\subsection{Extension to images with $m$ regions}
\label{sec:multi}

It is possible to generalize the algebraic constraints in
Proposition~\ref{prop:constraints} to the case of images with $m$
regions.  Let $\R_0,\ldots,\R_{m-1}$ be a partition of $\Omega$ into
$m$ regions.  In this case we have parameters $\theta_i$, $w_i$ and
$\epsilon_{ij}$ for $0 \le i,j \le m-1$ and $j \neq i$.  Here
$\theta_i$ is the appearance model associated with $\R_i$ while $w_i = P(x
\in \R_i)$ and $\epsilon_{ij} = P(x \in \R_i, y \in \R_j)$.  Since $x$
and $y$ are exchangeable $\epsilon_{ij} = \epsilon_{ji}$.

Using the law of total probability we can derive
the following algebraic expressions,
\begin{equation}
    \alpha = \sum_{i=0}^{m-1} w_i \theta_i,
\end{equation}
\begin{equation}
    \beta = \sum_{i=0}^{m-1} \left( (w_i - \sum_{j \neq i} \epsilon_{ij}) \theta_i \theta_i^{\top} + \sum_{j \neq i} \epsilon_{ij} \theta_i \theta_j^{\top}\right).
\end{equation}\vspace{.2cm}

We note, however, that the algorithms we develop to estimate
appearance models when $m=2$ do not readily generalize to the case
when $m >2$.  For the remainder of the paper we consider only the case
of images with 2 regions.

\section{Appearance Estimation}

As discussed in the previous section we can estimate $\alpha$ and
$\beta$ directly from an image.  In this section we show how we can
recover $\theta_0$ and $\theta_1$ from $\alpha$ and $\beta$ or their
estimates $\hat{\alpha}$ and $\hat{\beta}$.  Initially we assume that
$w_0$, $w_1$ and $\epsilon$ are known, and present two different
methods (Sections~\ref{sec:alg} and~\ref{sec:spec}) for estimating
$\theta_0$ and $\theta_1$.  We consider the case where $w_0$, $w_1$
and $\epsilon$ are unknown in Section~\ref{sec:w0w1e}.

Our methods require that $w_0 w_1 \neq \epsilon$.
Proposition~\ref{prop:spectral} in Section~\ref{sec:spec} shows that
if $w_0 w_1 = \epsilon$ then $\beta = \alpha \alpha^{\top}$.  In this
case we do not have enough information to recover the appearance
models.  In general we expect that $w_0 w_1 > \epsilon$ as long $r$ is
not too large.  This condition imposes certain constraints of the
shape and size of the regions $\R_0$ and $\R_1$ relative to the value
of $r$.

\subsection{Algebraic Method}
\label{sec:alg}

Suppose $\alpha$, $\beta$, $w_0$, $w_1$ and $\epsilon$ are known, and
consider the problem of recovering $\theta_0$ and $\theta_1$.

Let $L = \{1,\ldots,k\}$.  We have $2k$ unknowns
$\theta_0(1),\ldots,\theta_0(k)$ and $\theta_1(1),\ldots,\theta_1(k)$.
Proposition~\ref{prop:constraints} defines $k$ linear and $k^2$
quadratic constraints,
\begin{eqnarray}
& \alpha(i) = w_0 \theta_0(i) + w_1 \theta_1(i), \\
& \beta(i,j) = (w_0-\epsilon) \theta_0(i) \theta_0(j) + (w_1-\epsilon) \theta_1(i) \theta_1(j) + \epsilon \theta_0(i) \theta_1(j) + \epsilon \theta_1(i) \theta_0(j).
\end{eqnarray}

Since $\theta_0$ and $\theta_1$ define probability distributions we
also have the additional constraints that both vectors should sum to
one and $\theta_0(i) \ge 0$, $\theta_1(i) \ge 0$ for $1 \le i \le k$.

\subsubsection{Minimal constraints}

We first consider a simple method that uses a subset of the
constraints to solve for all of the unknowns in the appearance
models. The approach uses the $k$ linear constraints defined by
$\alpha$ and $k$ quadratic constraints defined by a single row of
$\beta$.  For the derivation below we treat $\alpha$ and
$\beta$ as known although in practice we only have empirical 
estimates of the two distributions.  

\begin{enumerate}
\item Let $i \in L$.  Using the quadratic constraint defined by
  $\beta(i,i)$ and the linear constraint defined by $\alpha(i)$ we can
  solve for $\theta_0(i)$ and $\theta_1(i)$,
\begin{equation}
{\theta}_0(i) = {\alpha}(i) \pm \frac{\sqrt{\frac{w_0 w_1 - \epsilon}{\omega_1^2} (\beta(i,i)-\alpha^2(i))}}{\frac{w_0 w_1 -\epsilon}{w_1^2}}, \;
{\theta}_1(i) = \frac{{\alpha}(i) - w_0{\theta_0}(i)}{w_1}.
\end{equation} 

\item Now consider each $j \in L$ with $j \neq i$.  Since we solved
  for $\theta_0(i)$ and $\theta_1(i)$ in Step 1, now $\beta(i,j)$
  defines a linear constraint on $\theta_0(j)$ and $\theta_1(j)$.
  Together with the linear constraint defined by $\alpha(j)$ we can
  solve for $\theta_0(j)$ and $\theta_1(j)$,
\begin{equation}\label{eq:entryj}
{\theta}_0(j) = \frac{w_1 \beta(i,j)-\alpha(j)(w_1{\theta}_1(i)+\epsilon({\theta}_0(i)-{\theta}_1(i)))}{(w_0w_1-\epsilon)({\theta}_0(i)-{\theta}_1(i))}, \;\;
{\theta}_1(j) = \frac{{\alpha}(j) - w_0{\theta_0}(j)}{w_1}.
\end{equation}
\end{enumerate}

When $\beta(i,i) = \alpha^2(i)$ we have $\theta_0(i) = \theta_1(i)$.
To avoid dividing by zero when solving for $\theta_0(j)$ in Step 2 and
to increase the robustness of the method, we can select $i$ maximizing
$\beta(i,i)-\alpha^2(i)$ in Step 1.  Note that if
$\beta(i,i)-\alpha^2(i) = 0$ for all $i$ then $\theta_0 = \theta_1 =
\alpha$.

To solve for $\theta_0(i)$ in Step 1 we require that 
$(w_0 w_1 - \epsilon)(\beta(i,i)-\alpha^2(i)) \ge 0$.  Proposition
\ref{prop:spectral} below shows that under the assumptions we have made
\begin{equation} \label{eq:beta_minus_alpha}
    \beta(i,i)-\alpha^2(i) = (w_0w_1 - \epsilon)(\theta_0(i) - \theta_1(i))^2.
\end{equation}
Therefore $(w_0 w_1 - \epsilon)(\beta(i,i) - \alpha^2(i)) \geq 0$.

\subsubsection{Least squares solution}

The approach described above uses a small number of the constraints
defined by $\alpha$ and $\beta$ to exactly recover $\theta_0$ and
$\theta_1$.  However, in practice we only have empirical estimates of
$\alpha$ and $\beta$.  We also don't expect real data to perfectly fit
our assumptions.  We now describe an alternative method that uses all
of the constraints defined by $\alpha$ and $\beta$ in a least squares
formulation.

Let $i_1,\ldots,i_k$ be an ordering of $L=\{1,\ldots,k\}$.  Our
empirical results show that ordering the indices in decreasing
value of $\hat{\beta}(i,i)-\hat{\alpha}^2(i)$ works well and is 
better than a random order.

\begin{enumerate}
\item We start by solving for $\theta_0(i_1)$ and $\theta_1(i_1)$
  using the quadratic constraint
  defined by $\hat{\beta}(i_1,i_1)$ and the linear constraint
  defined by $\hat{\alpha}(i_1)$.
\begin{equation}
{\theta}_0(i_1) = \hat{\alpha}(i) \pm \frac{\sqrt{\frac{w_0 w_1 - \epsilon}{\omega_1^2} (\hat{\beta}(i_1,i_1)-\hat{\alpha}^2(i_1))}}{\frac{w_0 w_1 -\epsilon}{w_1^2}}, \;
{\theta}_1(i_1) = \frac{\hat{\alpha}(i_1) - w_0{\theta_0}(i_1)}{w_1}.
\end{equation} 

\item We iterate $\ell$ from $2$ to $k$ and solve for
  $\theta_0(i_\ell)$ and $\theta_1(i_\ell)$ in each step.  When
  solving for $\theta_0(i_\ell)$ and $\theta_1(i_\ell)$ we already
  have values for $\theta_0(i_1),\ldots,\theta_0(i_{\ell-1})$ and
  $\theta_1(i_1),\ldots,\theta_1(i_{\ell-1})$.  Therefore
  $\hat{\beta}(i_1,i_\ell),\ldots,\hat{\beta}(i_{\ell-1},i_\ell)$
  define $\ell-1$ linear constraints on $\theta_0(i_\ell)$ and
  $\theta_1(i_\ell)$.  Together with the constraint defined by
  $\hat{\alpha}(i_\ell)$ we form a system with $\ell$ linear equations
  and 2 unknowns that can be solved using linear least squares:
  $$\forall j \in \{i_1,\ldots,i_{\ell-1}\}\;\; ((w_0-\epsilon) \theta_0(j) + \epsilon \theta_1(j)) \theta_0(i_\ell) + ((w_1 - \epsilon) \theta_1(j) + \epsilon \theta_0(j)) \theta_1(i_\ell) = \hat{\beta}(j, i_\ell),$$
  $$w_0 \theta_0(i_\ell) + w_1 \theta_1(i_\ell) = \hat{\alpha}(i_\ell).$$

\item We improve our estimates by iterating $\ell$ from $1$ to $k$ and
  re-estimate $\theta_0(i_\ell)$ and $\theta_1(i_\ell)$ in each step.
  To re-estimate $\theta_0(i_\ell)$ and $\theta_1(i_\ell)$ we use
  $\hat{\beta}(j,i_\ell)$ and the current values for $\theta_0(j)$ and
  $\theta_1(j)$ for $j \neq i_\ell$ to define $k-1$ linear
  constraints.  Together with the constraint defined by
  $\hat{\alpha}(i_\ell)$ we form a system with $k$ linear equations
  and 2 unknowns that can be solved using linear least squares:
  $$\forall j \neq i_\ell\;\; ((w_0-\epsilon) \theta_0(j) + \epsilon \theta_1(j)) \theta_0(i_\ell) + ((w_1 - \epsilon) \theta_1(j) + \epsilon \theta_0(j)) \theta_1(i_\ell) = \hat{\beta}(j, i_\ell)$$
  $$w_0 \theta_0(i_\ell) + w_1 \theta_1(i_\ell) = \hat{\alpha}(i_\ell)$$
  
  Empirically we found that iterating over the entries one time using
  this method is enough to obtain improved results.

\item We set $\theta_s(i) = \max(\theta_s(i),0)$ and normalize
  $\theta_0$ and $\theta_1$ to add up to one.  This ensures $\theta_0$
  and $\theta_1$ define valid probability distributions.
\end{enumerate}

Note that there are two choices for the value of $\theta_0(i_1)$ when
solving the quadratic equation in Step 1.  We consider both choices to
estimate full appearance models.  We then compare $\beta$ defined by
the estimated models and Equation (\ref{eq:beta}) to $\hat{\beta}$
using the Bhattacharyya distance.  We select the appearance models
leading to the smaller Bhattacharyya distance.

\subsection{Spectral Method}
\label{sec:spec}

Now we describe a spectral method for estimating the appearance
models.  As in the previous section we assume the values of $w_0$, $w_1$ and
$\epsilon$ are known.  The following proposition provides the basis
for the approach.

\begin{proposition}
\label{prop:spectral}
\begin{equation}
    \beta - \alpha\alpha^\top = \left(w_0 w_1 - \epsilon\right) \left(\theta_0-\theta_1\right)\left(\theta_0-\theta_1\right)^\top
\end{equation}
\end{proposition}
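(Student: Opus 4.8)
The plan is to prove the identity by direct substitution, using the two algebraic expressions from Proposition~\ref{prop:constraints} and simplifying the difference $\beta - \alpha\alpha^\top$ entrywise (or equivalently as an outer-product identity). Both sides are $k \times k$ matrices, so it suffices to verify equality in each entry $(i,j)$, which reduces the matrix identity to a scalar algebraic check.

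First I would write $\alpha\alpha^\top$ as an outer product. From \eqref{eq:alpha} we have $\alpha = w_0\theta_0 + w_1\theta_1$, so
\begin{equation*}
\alpha\alpha^\top = w_0^2\,\theta_0\theta_0^\top + w_1^2\,\theta_1\theta_1^\top + w_0 w_1\,\theta_0\theta_1^\top + w_0 w_1\,\theta_1\theta_0^\top.
\end{equation*}
Next I would subtract this from the expression for $\beta$ in \eqref{eq:beta}, grouping the result by the four outer products $\theta_0\theta_0^\top$, $\theta_1\theta_1^\top$, $\theta_0\theta_1^\top$, and $\theta_1\theta_0^\top$. The coefficients become $(w_0 - \epsilon_r) - w_0^2$, $(w_1 - \epsilon_r) - w_1^2$, $\epsilon_r - w_0 w_1$, and $\epsilon_r - w_0 w_1$ respectively.

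The key simplifying step is to exploit $w_0 + w_1 = 1$. Using this, I would rewrite the $\theta_0\theta_0^\top$ coefficient as $w_0 - \epsilon_r - w_0^2 = w_0(1 - w_0) - \epsilon_r = w_0 w_1 - \epsilon_r$, and similarly the $\theta_1\theta_1^\top$ coefficient becomes $w_1 w_0 - \epsilon_r = w_0 w_1 - \epsilon_r$. The two cross-term coefficients are already $\epsilon_r - w_0 w_1 = -(w_0 w_1 - \epsilon_r)$. Thus every coefficient is $\pm(w_0 w_1 - \epsilon_r)$, and factoring this scalar out leaves
\begin{equation*}
\beta - \alpha\alpha^\top = (w_0 w_1 - \epsilon_r)\bigl(\theta_0\theta_0^\top + \theta_1\theta_1^\top - \theta_0\theta_1^\top - \theta_1\theta_0^\top\bigr).
\end{equation*}
The final step is to recognize the matrix in parentheses as the outer product $(\theta_0 - \theta_1)(\theta_0 - \theta_1)^\top$, which follows from expanding $(\theta_0 - \theta_1)(\theta_0 - \theta_1)^\top = \theta_0\theta_0^\top - \theta_0\theta_1^\top - \theta_1\theta_0^\top + \theta_1\theta_1^\top$.

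I do not anticipate a genuine obstacle here; the result is a routine algebraic identity rather than a deep statement. The only place requiring care is the bookkeeping of the four coefficient terms and the correct use of $w_0 + w_1 = 1$ to collapse them into a single common factor. As a sanity check, I note that the diagonal entry of this identity immediately recovers \eqref{eq:beta_minus_alpha}, namely $\beta(i,i) - \alpha^2(i) = (w_0 w_1 - \epsilon_r)(\theta_0(i) - \theta_1(i))^2$, which provides a useful consistency verification of the computation.
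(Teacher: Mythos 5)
Your proof is correct and follows essentially the same route as the paper: substitute the expressions for $\beta$ and $\alpha = w_0\theta_0 + w_1\theta_1$ from Proposition~\ref{prop:constraints}, group by the four outer products, use $w_0 + w_1 = 1$ to collapse each coefficient to $\pm(w_0 w_1 - \epsilon_r)$, and factor the result as $(w_0 w_1 - \epsilon_r)(\theta_0-\theta_1)(\theta_0-\theta_1)^\top$. The concluding sanity check against Equation~(\ref{eq:beta_minus_alpha}) is a nice touch but does not change the argument.
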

\begin{proof}
First note that $w_0 + w_1 = 1$ implies $w_0 w_1 = w_0 - w_0^2$ and $w_0 w_1 = w_1 - w_1^2$.
\begin{align*}
 \beta - \alpha \alpha^{\top} & = (w_0 - \epsilon)\theta_0\theta_0^{\top} + (w_1 - \epsilon)\theta_1\theta_1^{\top} + \epsilon \theta_0\theta_1^{\top} + \epsilon\theta_1\theta_0^{\top} - (w_0 \theta_0 + w_1 \theta_1)(w_0 \theta_0 + w_1 \theta_1)^{\top} \\
 & = (w_0 - \epsilon - w_0^2)\theta_0 \theta_0^{\top} + (w_1 - \epsilon - w_1^2) \theta_1 \theta_1^{\top} + (\epsilon - w_0 w_1) \theta_0 \theta_1^{\top} + (\epsilon - w_1 w_0) \theta_1 \theta_0^{\top} \\
 & = (w_0 w_1 - \epsilon)\theta_0 \theta_0^{\top} + (w_0 w_1 - \epsilon) \theta_1 \theta_1^{\top} - (w_0 w_1- \epsilon) \theta_0 \theta_1^{\top} - (w_1 w_0 - \epsilon) \theta_1 \theta_0^{\top} \\
  & = (w_0 w_1 - \epsilon)(\theta_0 - \theta_1) (\theta_0 - \theta_1)^{\top} 
 \end{align*}
\end{proof}

Let $u = \theta_0 - \theta_1$.  The above result shows that that
matrix $\beta - \alpha\alpha^\top$ is of rank one and its only
eigenvector with non-zero eigenvalue is proportional to $u$.
Moreover, the corresponding eigenvalue is $(w_0 w_1 - \epsilon) || u
||^2$.

The matrix $\hat{\beta}-\hat{\alpha}\hat{\alpha}^{\top}$ defines an
approximation to $\beta - \alpha\alpha^\top$ that can be computed from an image.
Let $v$ be the dominant
eigenvector of $\hat{\beta}-\hat{\alpha}\hat{\alpha}^{\top}$
normalized so that $||v||=1$.  The vector $v$ gives us an estimate of
$u / || u ||$ up to a sign ambiguity.  The corresponding eigenvalue
$\lambda$ can be used to approximate $(w_0w_1 - \epsilon) ||u ||^2$.
We can estimate $u$ as,
\begin{equation}
    \hat{u} = \pm \sqrt{\frac{\lambda}{w_0w_1 - \epsilon}}v
    \label{eq:u}
\end{equation}
We can then use $\hat{\alpha}$ and Equation (\ref{eq:alpha}) 
to estimate $\theta_0$ and $\theta_1$,
\begin{eqnarray*}
\theta_0 & = \hat{\alpha} + w_1 \hat{u},\\
\theta_1 & = \hat{\alpha} - w_0 \hat{u}.
\end{eqnarray*}

Finally, we set $\theta_s(i) = \max(\theta_s(i),0)$ and normalize
$\theta_0$ and $\theta_1$ to add up to one.  This ensures $\theta_0$
and $\theta_1$ define valid probability distributions.

To handle the sign ambiguity in Equation (\ref{eq:u}) we consider both
choices to estimate $\theta_0$ and $\theta_1$.  We then compare
$\beta$ defined by the estimated appearance models and Equation
(\ref{eq:beta}) to $\hat{\beta}$ using the Bhattacharyya distance.  We
select the choice of sign in Equation (\ref{eq:u}) leading to the
smaller Bhattacharyya distance.

Notice that we can use simple power iteration methods to compute $v$
and $\lambda$.  The algorithm complexity depends on $|L|$, which
is usually much smaller that the number of pixels in the image.  The
rate of convergence of power iteration depends on the spectral gap,
which is proportional to $w_0 w_1 - \epsilon$.

\subsection{Estimating $w_0$, $w_1$ and $\epsilon$}
\label{sec:w0w1e}

The methods described above assume $w_0$, $w_1$ and $\epsilon$ are
known.  We have experimented with two different approaches for
estimating the appearance models when $w_0$, $w_1$ and $\epsilon$ are
unknown.  The first approach simply selects a typical, or average,
value for each of the unknown parameters.  The second approach
involves an explicit search over a discretized set of choices for the
unknown parameters.

\subsubsection{Typical values}

\begin{figure}[t]
    \centering
    \includegraphics[width=2in]{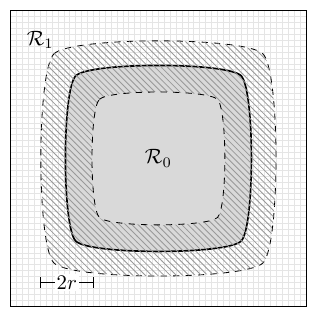}
    \caption{The area where pairs of pixels with $||x-y||=r$ can be in different regions.}
    \label{fig:epsilon}
\end{figure}

A simple approximation for $w_0$ and $w_1$ involves setting $w_0 = w_1
= 0.5$.  Although this is a crude approximation we have found that it
leads to good appearance models in a wide variety of images.

In order to approximate $\epsilon$, we start from the assumption
that the ground truth segmentation $S$ is spatially coherent and that
the boundary between regions $\partial S$ is short, i.e., $|\partial
S| \approx \sqrt{|\Omega|}$.  In this case $\epsilon$ is
proportional to the area within distance $r$ from $\partial S$ divided
by $|\Omega|$, see Figure~\ref{fig:epsilon}.  In our experiments we
set $r = \rho\sqrt{|\Omega|}$, where $\rho$ is a parameter set by the
user.  This makes the selection of the distance $r$ be adaptive to the
image resolution.  Our estimate of $\epsilon$ then becomes,
\begin{equation}\label{eq:epsilon_estimation}
\epsilon = \kappa\frac{r\sqrt{|\Omega|}}{|\Omega|} = \kappa \rho.
\end{equation}
We have found that setting $\kappa = 0.5$ often leads to good results
in practice.

\subsubsection{Searching over $w_0$, $w_1$ and $\epsilon$}

To search over $w_0$ and $w_1$ we use the fact that $w_0 + w_1 = 1$
and search over possible values for $w_0$.  We assume without
loss of generality that $w_0 \le w_1$ and $w_0 \in (0, 0.5)$.  In
practice we discretize the interval $(0,0.5)$ using a step of size of
$0.05$, leading to 11 choices for $w_0$.  To estimate $\epsilon$ we
search over the interval $(0,0.1)$ using a step size of $0.01$,
leading to 11 choices for $\epsilon$.  Together this leads to 121
combined choices for $w_0$ and $\epsilon$.

For each choice of parameters $w_0$, $w_1$ and $\epsilon$ we estimate
$\theta_0$ and $\theta_1$ using either the algebraic or spectral
method above.  We then compare $\beta$ defined by the estimated
appearance models and Equation (\ref{eq:beta}) to the empirical
$\hat{\beta}$ computed from the image.  We select the model parameters
minimizing the Bhattacharyya distance between $\beta$ and
$\hat{\beta}$.

Searching over $w_0$, $w_1$ and $\epsilon$ with the spectral method is
fairly efficient because the bottleneck in the spectral method is
computing the dominant eigenvector of
$\hat{\beta}-\hat{\alpha}\hat{\alpha}^{\top}$.  Since this matrix does
not depend on the unknown parameters the eigenvector only has to be
computed once.  In this case searching for the parameters leads to
limited overhead.  Searching for the parameters with the algebraic
method is much less efficient.  The experiments in the
Section~\ref{sec:experiments} evaluate the running time of the
different approaches.

\subsection{Examples}

Figures~\ref{fig:example_estimation_algebraic}
and~\ref{fig:example_estimation_spectral} illustrate the estimation of
appearance models on real images with the algebraic and spectral
methods respectively.  In these examples we used $\rho = 0.06$ to
select $r$.  The values of $w_0$, $w_1$ and $\epsilon$ were estimated
separately for each image by searching over discrete choices as
described in the last section.  For comparison we also show the
appearance models computed using ground truth segmentations.  For the
case of a ground truth segmentation the appearance models are
normalized histograms of the pixel values within each region.  We see
that both the algebraic and spectral methods give good results,
leading to appearance models that are close to the ground truth.

\begin{figure}[t]
    \centering
    
    \begin{subfigure}[t]{0.24\linewidth}
        \centering
        \includegraphics[width= .99\linewidth, height = 3cm,frame]{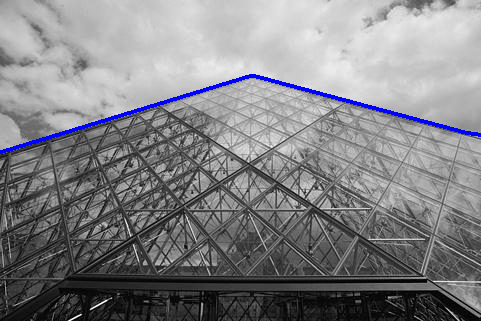}
    \end{subfigure}%
    \hspace{17pt}
    \begin{subfigure}[t]{0.35\linewidth}
        \centering
        \begin{tikzpicture}[trim axis left]
            \begin{axis}[
                scale only axis,
                height=3cm,
                width=\linewidth,
                max space between ticks=40,
                xmin=0, xmax=226,
                ticks=none,
                ylabel near ticks,
                legend style={font=\small, at={(0.5,0.95)},anchor=north},
                legend columns=-1
                ]
            \addplot+[mark=none, blue, thick] table[x index=0,y index=1] {data/estimation_comparison/BSD1_r24_theta.dat}; \label{line:gt1}
            \addplot+[mark=none, green, thick] table[x index=0,y index=6] {data/estimation_comparison/BSD1_r24_theta.dat};\label{line:alg}
            \end{axis}
        \end{tikzpicture}
    \end{subfigure}%
    ~
    \begin{subfigure}[t]{0.35\linewidth}
        \centering
        \begin{tikzpicture}[trim axis left]
            \begin{axis}[
                scale only axis,
                height=3cm,
                width=\linewidth,
                max space between ticks=40,
                xmin=0, xmax=250,
                ticks=none,
                ylabel near ticks,
                legend style={font=\small, at={(0.5,.95)},anchor=north},
                legend columns=-1
                ]
            \addplot+[mark=none, blue, thick] table[x index=0,y index=2] {data/estimation_comparison/BSD1_r24_theta.dat};
            \addplot+[mark=none, green, thick] table[x index=0,y index=5]  {data/estimation_comparison/BSD1_r24_theta.dat};
            \end{axis}
        \end{tikzpicture}
    \end{subfigure}
    
    \vspace{5pt}
    
    \begin{subfigure}[t]{0.24\linewidth}
        \centering
        \includegraphics[width= .99\linewidth,height = 3cm, frame]{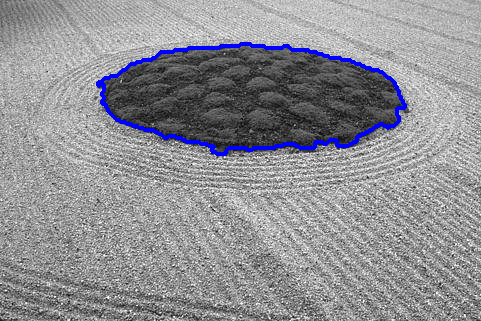}
    \end{subfigure}%
    \hspace{17pt}
    \begin{subfigure}[t]{0.35\linewidth}
        \centering
        \begin{tikzpicture}[trim axis left]
            \begin{axis}[
                scale only axis,
                height=3cm,
                width=\linewidth,
                max space between ticks=40,
                xmin=0, xmax=228,
                ticks=none,
                ylabel near ticks,
                legend style={font=\small, at={(0.5,0.25)},anchor=north},
                legend columns=-1
                ]
            \addplot+[mark=none, blue, thick] table[x index=0,y index=1] {data/estimation_comparison/BSD2_r24_theta.dat};
            \addplot+[mark=none, green, thick] table[x index=0,y index=6] {data/estimation_comparison/BSD2_r24_theta.dat};
            \end{axis}
        \end{tikzpicture}
    \end{subfigure}%
    ~
    \begin{subfigure}[t]{0.35\linewidth}
        \centering
        \begin{tikzpicture}[trim axis left]
            \begin{axis}[
                scale only axis,
                height=3cm,
                width=\linewidth,
                max space between ticks=40,
                xmin=0, xmax=228,
                ticks=none,
                ylabel near ticks,
                legend style={font=\small, at={(0.5,0.25)},anchor=north},
                legend columns=-1
                ]
            \addplot+[mark=none, blue, thick] table[x index=0,y index=2] {data/estimation_comparison/BSD2_r24_theta.dat};
            \addplot+[mark=none, green, thick] table[x index=0,y index=5]  {data/estimation_comparison/BSD2_r24_theta.dat};
            \end{axis}
        \end{tikzpicture}
    \end{subfigure}
    
    \vspace{5pt}
    
    \begin{subfigure}[t]{0.24\linewidth}
        \centering
        \includegraphics[width= .99\linewidth, height=3cm,frame]{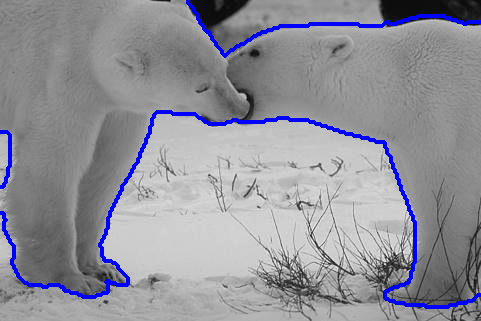}
        \caption{Image}
    \end{subfigure}%
    \hspace{17pt}
    \begin{subfigure}[t]{0.35\linewidth}
        \centering
        \begin{tikzpicture}[trim axis left]
            \begin{axis}[
                scale only axis,
                height=3cm,
                width=\linewidth,
                max space between ticks=40,
                xmin=0, xmax=200,
                ticks=none,
                ylabel near ticks,
                legend style={font=\small, at={(0.5,0.25)},anchor=north},
                legend columns=-1
                ]
            \addplot+[mark=none, blue, thick] table[x index=0,y index=1] {data/estimation_comparison/BSD5_r24_theta.dat};
            \addplot+[mark=none, green, thick] table[x index=0,y index=6] {data/estimation_comparison/BSD5_r24_theta.dat};
            \end{axis}
        \end{tikzpicture}
        \caption{$\theta_0$}
    \end{subfigure}%
    ~
    \begin{subfigure}[t]{0.35\linewidth}
        \centering
        \begin{tikzpicture}[trim axis left]
            \begin{axis}[
                scale only axis,
                height=3cm,
                width=\linewidth,
                max space between ticks=40,
                xmin=0, xmax=200,
                ticks=none,
                ylabel near ticks,
                legend style={font=\small, at={(0.5,0.25)},anchor=north},
                legend columns=-1
                ]
            \addplot+[mark=none, blue, thick] table[x index=0,y index=2] {data/estimation_comparison/BSD5_r24_theta.dat};
            \addplot+[mark=none, green, thick] table[x index=0,y index=5]  {data/estimation_comparison/BSD5_r24_theta.dat};
            \end{axis}
        \end{tikzpicture}
        \caption{$\theta_1$}
    \end{subfigure}
    \caption{Estimation of appearance models with the algebraic
      method.  In (a) we show the input images and their ground truth
      segmentation.  In (b) and (c) we show the appearance models
      computed using the ground truth segmentation in blue
      (\ref{line:gt1}) and the algebraic method in green
      (\ref{line:alg}).  The images are from the Berkeley Segmentation
      \cite{martin01} dataset.}
    \label{fig:example_estimation_algebraic}
\end{figure}

\begin{figure}[t]
    \centering
    
    \begin{subfigure}[t]{0.24\linewidth}
        \centering
        \includegraphics[width= .99\linewidth, height = 3cm,frame]{data/estimation_comparison/BSD1_r30.png}
    \end{subfigure}%
    \hspace{17pt}
    \begin{subfigure}[t]{0.35\linewidth}
        \centering
        \begin{tikzpicture}[trim axis left]
            \begin{axis}[
                scale only axis,
                height=3cm,
                width=\linewidth,
                max space between ticks=40,
                xmin=0, xmax=226,
                ticks=none,
                ylabel near ticks,
                legend style={font=\small, at={(0.5,0.95)},anchor=north},
                legend columns=-1
                ]
            \addplot+[mark=none, blue, thick] table[x index=0,y index=1] {data/estimation_comparison/BSD1_r24_theta.dat}; \label{line:gt2}
            \addplot+[mark=none, red, thick] table[x index=0,y index=4] {data/estimation_comparison/BSD1_r24_theta.dat};\label{line:spec}
            \end{axis}
        \end{tikzpicture}
    \end{subfigure}%
    ~
    \begin{subfigure}[t]{0.35\linewidth}
        \centering
        \begin{tikzpicture}[trim axis left]
            \begin{axis}[
                scale only axis,
                height=3cm,
                width=\linewidth,
                max space between ticks=40,
                xmin=0, xmax=250,
                ticks=none,
                ylabel near ticks,
                legend style={font=\small, at={(0.5,.95)},anchor=north},
                legend columns=-1
                ]
            \addplot+[mark=none, red, thick] table[x index=0,y index=3] {data/estimation_comparison/BSD1_r24_theta.dat};
            \addplot+[mark=none, blue, thick] table[x index=0,y index=2] {data/estimation_comparison/BSD1_r24_theta.dat};
            \end{axis}
        \end{tikzpicture}
    \end{subfigure}
    
    \vspace{5pt}
    
    \begin{subfigure}[t]{0.24\linewidth}
        \centering
        \includegraphics[width= .99\linewidth,height = 3cm, frame]{data/estimation_comparison/BSD2_r30.png}
    \end{subfigure}%
    \hspace{17pt}
    \begin{subfigure}[t]{0.35\linewidth}
        \centering
        \begin{tikzpicture}[trim axis left]
            \begin{axis}[
                scale only axis,
                height=3cm,
                width=\linewidth,
                max space between ticks=40,
                xmin=0, xmax=228,
                ticks=none,
                ylabel near ticks,
                legend style={font=\small, at={(0.5,0.25)},anchor=north},
                legend columns=-1
                ]
            \addplot+[mark=none, blue, thick] table[x index=0,y index=1] {data/estimation_comparison/BSD2_r24_theta.dat};
            \addplot+[mark=none, red, thick] table[x index=0,y index=4] {data/estimation_comparison/BSD2_r24_theta.dat};            
            \end{axis}
        \end{tikzpicture}
    \end{subfigure}%
    ~
    \begin{subfigure}[t]{0.35\linewidth}
        \centering
        \begin{tikzpicture}[trim axis left]
            \begin{axis}[
                scale only axis,
                height=3cm,
                width=\linewidth,
                max space between ticks=40,
                xmin=0, xmax=228,
                ticks=none,
                ylabel near ticks,
                legend style={font=\small, at={(0.5,0.25)},anchor=north},
                legend columns=-1
                ]
            \addplot+[mark=none, blue, thick] table[x index=0,y index=2] {data/estimation_comparison/BSD2_r24_theta.dat};
            \addplot+[mark=none, red, thick] table[x index=0,y index=3] {data/estimation_comparison/BSD2_r24_theta.dat};
            \end{axis}
        \end{tikzpicture}
    \end{subfigure}
    
    \vspace{5pt}
    
    \begin{subfigure}[t]{0.24\linewidth}
        \centering
        \includegraphics[width= .99\linewidth, height=3cm,frame]{data/estimation_comparison/BSD5_r30.png}
        \caption{Image}
    \end{subfigure}%
    \hspace{17pt}
    \begin{subfigure}[t]{0.35\linewidth}
        \centering
        \begin{tikzpicture}[trim axis left]
            \begin{axis}[
                scale only axis,
                height=3cm,
                width=\linewidth,
                max space between ticks=40,
                xmin=0, xmax=200,
                ticks=none,
                ylabel near ticks,
                legend style={font=\small, at={(0.5,0.25)},anchor=north},
                legend columns=-1
                ]
            \addplot+[mark=none, blue, thick] table[x index=0,y index=4] {data/estimation_comparison/BSD5_r24_theta.dat};
            \addplot+[mark=none, red, thick] table[x index=0,y index=1] {data/estimation_comparison/BSD5_r24_theta.dat};            
            \end{axis}
        \end{tikzpicture}
        \caption{$\theta_0$}
    \end{subfigure}%
    ~
    \begin{subfigure}[t]{0.35\linewidth}
        \centering
        \begin{tikzpicture}[trim axis left]
            \begin{axis}[
                scale only axis,
                height=3cm,
                width=\linewidth,
                max space between ticks=40,
                xmin=0, xmax=200,
                ticks=none,
                ylabel near ticks,
                legend style={font=\small, at={(0.5,0.25)},anchor=north},
                legend columns=-1
                ]
            \addplot+[mark=none, blue, thick] table[x index=0,y index=3] {data/estimation_comparison/BSD5_r24_theta.dat};
            \addplot+[mark=none, red, thick] table[x index=0,y index=2] {data/estimation_comparison/BSD5_r24_theta.dat};            
            \end{axis}
        \end{tikzpicture}
        \caption{$\theta_1$}
    \end{subfigure}
    \caption{Estimation of appearance models with the spectral method.
      In (a) we show the input images and their ground truth
      segmentation.  In (b) and (c) we show the appearance models
      computed using the ground truth segmentation in blue
      (\ref{line:gt2}) and the spectral method in red
      (\ref{line:spec}). The images are from the Berkeley Segmentation
      \cite{martin01} dataset.}
    \label{fig:example_estimation_spectral}
\end{figure}

\section{Numerical Experiments}
\label{sec:experiments}

All of our algorithms were implemented in Matlab and the experiments
presented here were run on a Laptop computer with an Intel Core
i5-6200U CPU 2.30GHz with 8 Gb of RAM.

\subsection{Evaluation Measures}
\label{sec:eval}

We use the Bhattacharyya distance to measure similarity between two
probability distributions.  Let $p$ and $q$ be two distributions over
a finite set $Z$.  The Bhattacharyya distance beetween $p$ and $q$ is,
\begin{equation}
    d_{B}(p, q) =  -\ln\left( \sum_{x \in Z} \sqrt{p(x)q(x)}\right).
\end{equation}

To evaluate the quality of the appearance models we estimate we
compare them to the appearance models defined by a ground truth
segmentation.

Let $I$ be an image with a ground truth segmentation defined by $\R_0$
and $\R_1$.  Let $\theta_0$ and $\theta_1$ be the normalized
histograms of the pixel values within each region.  Let
$\hat{\theta}_0$ and $\hat{\theta}_1$ be the appearance models
estimated from $I$ using one of our algorithms.  We assess the quality
of the estimated appearance models using a sum of two Bhattacharyya
distances, allowing for a permutation of the region labels,
\begin{equation}
    D_{B} = \min\left(\frac{d_{B}(\theta_0, \hat{\theta}_0) +d_{B}(\theta_1, \hat{\theta}_1)}{2}, \frac{d_{B}(\theta_0, \hat{\theta}_1) + d_{B}(\theta_1, \hat{\theta}_0)}{2}\right).
\end{equation}

We also evaluate the accuracy of segmentations obtained using
different methods by comparing them to the ground truth segmentation.

We assess the overlap between two regions $J,Q
\subseteq \Omega$ in different segmentations using the Jaccard
index, $$J(S,Q) = |S \cap Q|/|S \cup Q|.$$

Let $\Q_0$ and $\Q_1$ be two regions obtained by segmenting $I$.  We
compare the segmentation defined by $\Q_0$ and $\Q_1$ to the ground
truth segmentation defined by $\R_0$ and $\R_1$ using a sum of two
Jaccard indices, again allowing for a permutation of the region
labels,
\begin{equation}
\jac = \max\left(\frac{J(\R_0,\Q_0)+J(\R_1,\Q_1)}{2},\frac{J(\R_0,\Q_1)+J(\R_1,\Q_0)}{2}\right).
\end{equation}

\subsection{Synthetic Data}
\label{sec:synt}

\begin{figure}[t]
    \centering
    \includegraphics[width=0.19\linewidth]{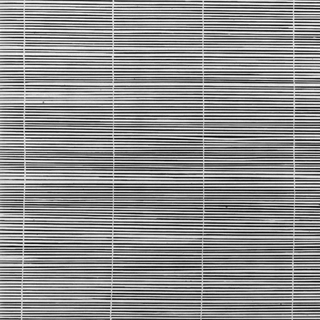}
    \includegraphics[width=0.19\linewidth]{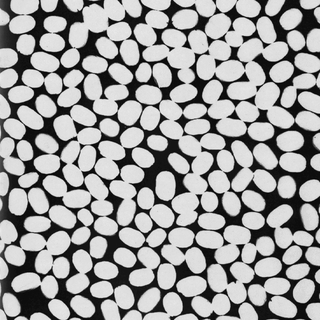}
    \includegraphics[width=0.19\linewidth]{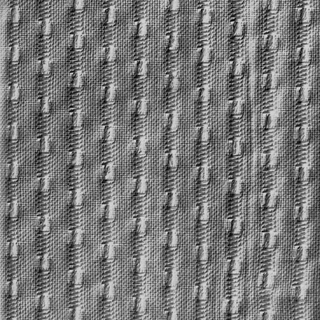}
    \includegraphics[width=0.19\linewidth]{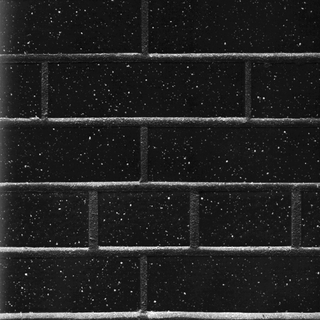}
    \includegraphics[width=0.19\linewidth]{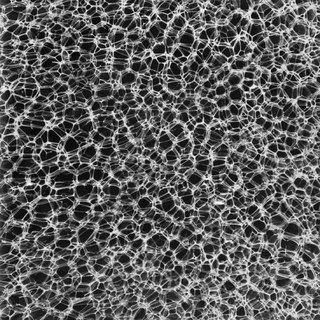}
    \vspace{.1cm}
    
    \includegraphics[width=0.19\linewidth]{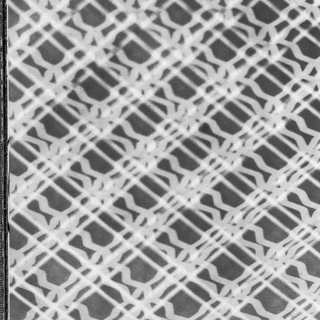}
    \includegraphics[width=0.19\linewidth]{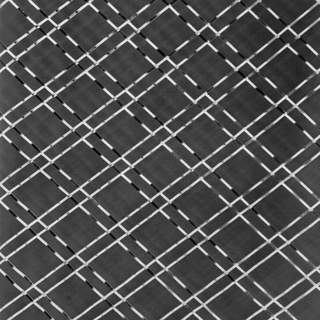}
    \includegraphics[width=0.19\linewidth]{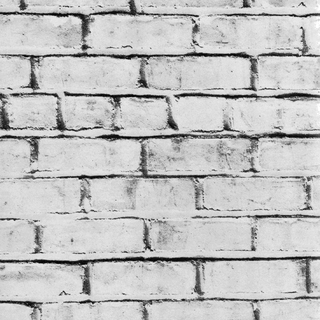}
    \includegraphics[width=0.19\linewidth]{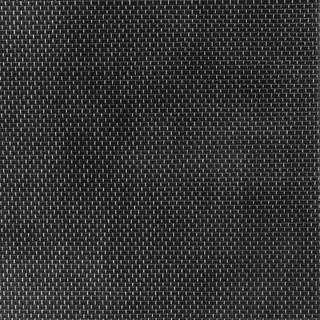}
    \includegraphics[width=0.19\linewidth]{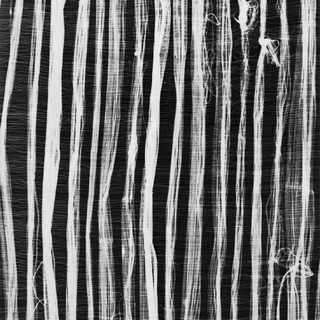}
    
    \caption{Selected Brodatz patterns}
    \label{fig:brodatz_patt}
\end{figure}

\begin{figure}[t]
    \centering
    \begin{subfigure}[t]{0.19\linewidth}
        \includegraphics[width=.99\linewidth, frame]{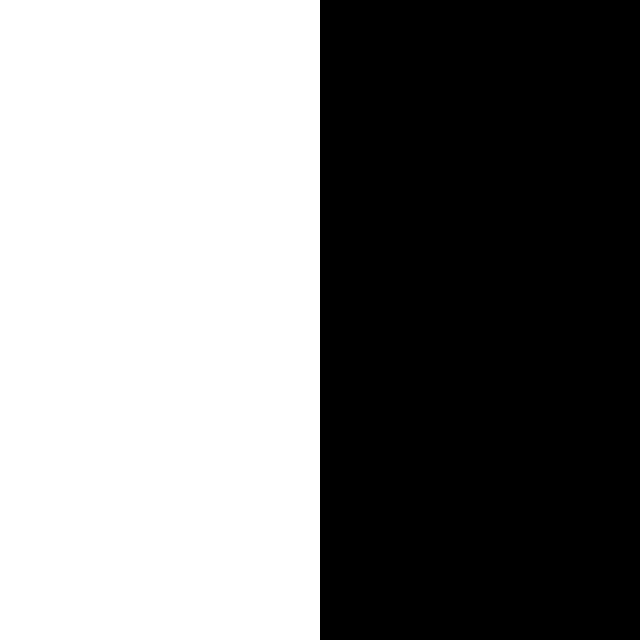}
        \caption{GT1}
    \end{subfigure}%
    $\,\,$%
    \begin{subfigure}[t]{0.19\linewidth}
        \includegraphics[width=.99\linewidth, frame]{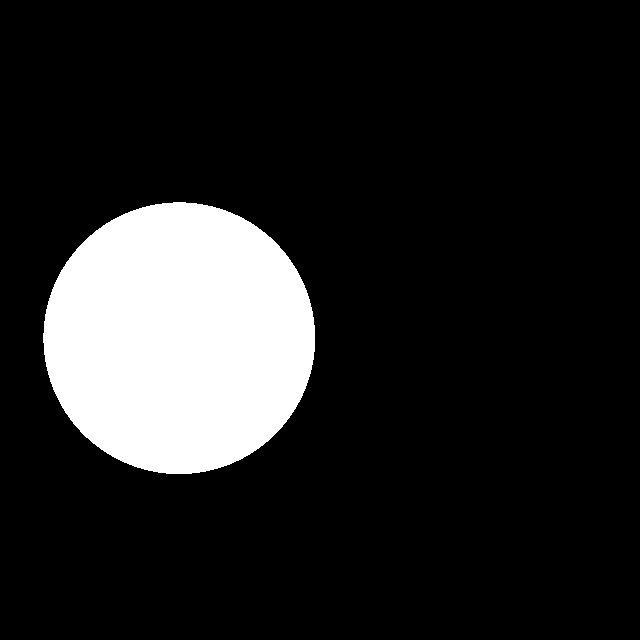}
        \caption{GT2}
    \end{subfigure}%
    $\,\,$%
    \begin{subfigure}[t]{0.19\linewidth}
        \includegraphics[width=.99\linewidth, frame]{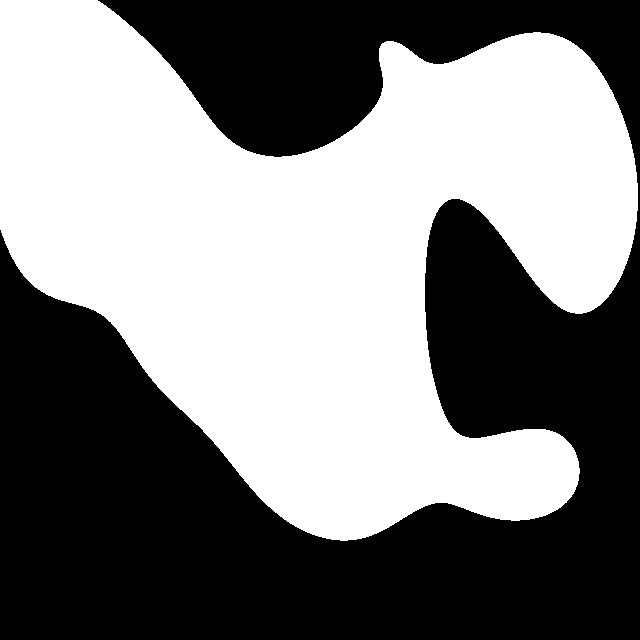}
        \caption{GT3}
    \end{subfigure}%
    $\,\,$%
    \begin{subfigure}[t]{0.19\linewidth}
        \includegraphics[width=.99\linewidth, frame]{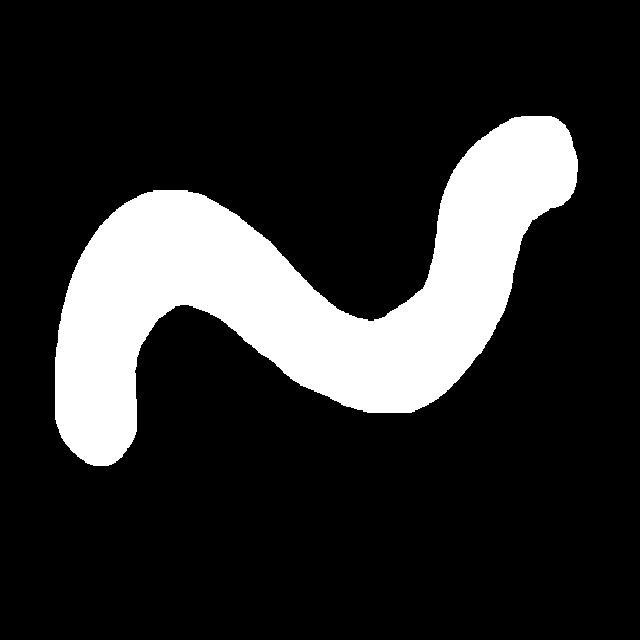}
        \caption{GT4}
    \end{subfigure}%
    $\,\,$%
    \begin{subfigure}[t]{0.19\linewidth}
        \includegraphics[width=.99\linewidth, frame]{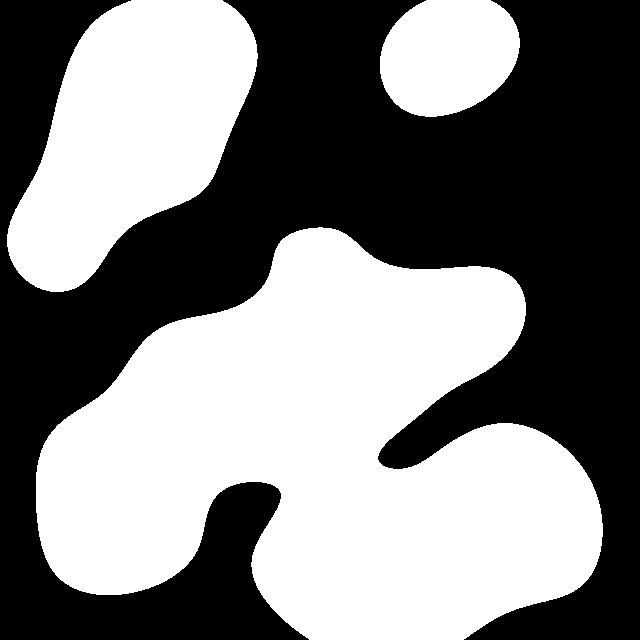}
        \caption{GT5}
    \end{subfigure}%
    
    \caption{Ground truth segmentations used to generate synthetic data.}

    \label{fig:gts}
\end{figure}

\begin{figure}[t]
    \centering
    \begin{subfigure}{\linewidth}
    \centerline{
        \includegraphics[width=.19\linewidth]{images/brodatz_patterns/BR_2_s.png}
        \includegraphics[width=.19\linewidth]{images/brodatz_patterns/BR_8_s.png}}
        \caption{Textures}
    \end{subfigure}
    \vspace{.2cm}

    \begin{subfigure}{\linewidth}
        \includegraphics[width=.19\linewidth]{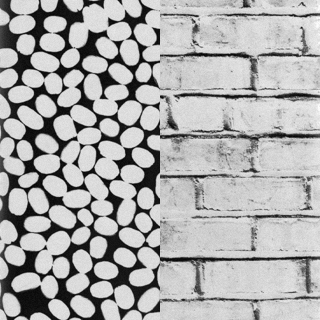}
        \includegraphics[width=.19\linewidth]{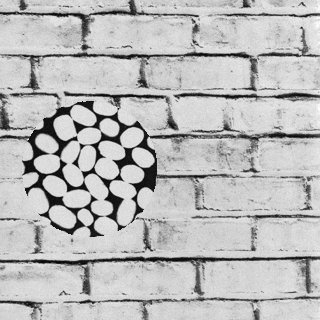}
        \includegraphics[width=.19\linewidth]{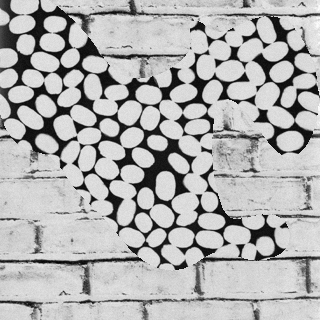}
        \includegraphics[width=.19\linewidth]{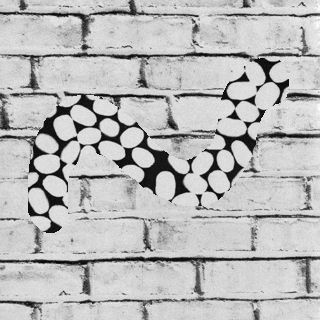}
        \includegraphics[width=.19\linewidth]{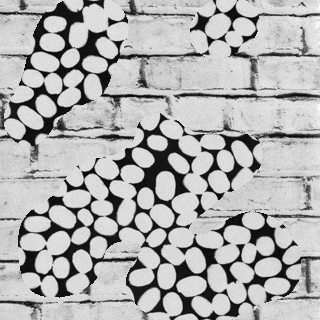}
        \caption{Synthetic images}
    \end{subfigure}
    \caption{Examples of synthesized images with textures.}
    \label{fig:syntheticdata}
\end{figure}

We first illustrate the results of a series of experiments with
synthetic data.  To generate the synthetic data we used the
segmentations masks in Figure~\ref{fig:gts} together with pairs of
images defined as follows:
\begin{itemize}
    \item IID: we used 50 pairs of random appearance models to
      generate pairs of images.  For each appearance model we generate
      a $320 \times 320$ image where the pixel values are independent
      samples from the corresponding distribution.

    \item Brodatz: we selected all possible pairings of images from
      the Brodatz textures \cite{brodatz1966textures} shown in
      Figure~\ref{fig:brodatz_patt}.  We resized the images to be $320
      \times 320$ pixels and added uniform IID noise to the pixels to
      to remove quantization artifacts.
\end{itemize}
For each pair of images defined above we use the segmentation masks in
Figure~\ref{fig:gts} to generate graylevel images with two regions.

Given a pair of images $(A,B)$ and a binary segmentation mask $M$
we generate a graylevel image $I$ with $I(x) = A(x)M(x)+B(x)(1-M(x))$.
Figure~\ref{fig:syntheticdata} shows the images generated using two
Brodatz patterns.

\subsection{Evaluating the effect of $r$}

In our experiments we set $r = \rho \sqrt{\Omega}$ where $\rho$ is a
parameter set by the user.  This makes the distance
$r$ adaptive to the image resolution.

In Figure~\ref{fig:example_estimation_rs} we evaluate the quality of
the appearance models estimated by our methods using different values
of $\rho$ on the synthetic data defined by the ground truth
segmentations GT1 and GT2 (Figure \ref{fig:gts}).  For these
experiments we set $w_0$, $w_1$ and $\epsilon$ using the ground truth
values defined by the corresponding segmentation masks.

Both of our algorithms almost perfectly recover the underlying
appearance models for images where the pixels in each region are IID.
In this case the methods work well over the whole range of values of
$\rho$ tested. This is expected since these images strictly follow
both Assumption \ref{as:homogeneity} and Assumption
\ref{as:independence} and, therefore, provide the optimal setting for
our algorithms.

For images with Brodatz textures Assumption \ref{as:independence} is
violated for small values of $\rho$.  As $\rho$ increases the
assumption is satisfied and the quality of our estimation improves.

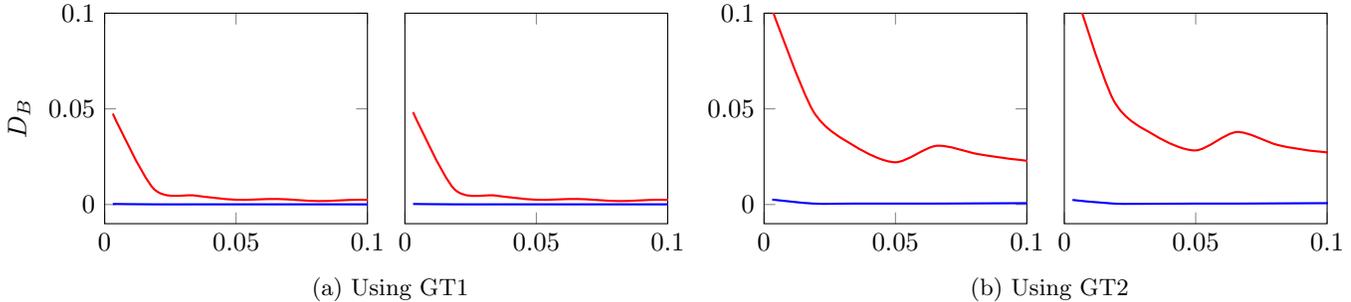
\begin{figure}[t]
    \centering    
    \begin{subfigure}[t]{0.46\linewidth}
        \centering
        \begin{tikzpicture}[trim axis left]
            \begin{axis}[
                scale only axis,
                height=2.8cm,
                width=.46\linewidth,
                ylabel={$D_B$},
                xmin=0.0, xmax=0.1,
                ymin=-0.01, ymax=0.1,
                xtick distance={.05},
                ylabel near ticks,
                scaled y ticks=false,
                tick label style={/pgf/number format/fixed},
                legend style={font=\small, at={(0.5,0.25)},anchor=north},
                legend columns=-1
                ]
            \addplot+[smooth, mark=none, blue, thick] table[x index=0,y index=3] {data/estimation_rs_gt1.dat};
            \addplot+[smooth, mark=none, red, thick] table[x index=0,y index=4] {data/estimation_rs_gt1.dat};
            \end{axis}
        \end{tikzpicture}%
        ~
        \begin{tikzpicture}[trim axis left]
            \begin{axis}[
                scale only axis,
                height=2.8cm,
                width=.46\linewidth,
                xmin=0.0, xmax=0.1,
                ymin=-0.01, ymax=0.1,
                xtick distance={.05},
                ymajorticks=false,
                ylabel near ticks,
                tick label style={/pgf/number format/fixed},
                legend style={font=\small, at={(0.5,0.25)},anchor=north},
                legend columns=-1
                ]
            \addplot+[smooth,mark=none, blue, thick] table[x index=0,y index=1] {data/estimation_rs_gt1.dat};
            \addplot+[smooth,mark=none, red, thick] table[x index=0,y index=2] {data/estimation_rs_gt1.dat};
            \end{axis}
        \end{tikzpicture}
        \caption{Using GT1}
    \end{subfigure}%
    \hspace{30pt}
    \begin{subfigure}[t]{0.46\linewidth}
        \centering
        \begin{tikzpicture}[trim axis left]
            \begin{axis}[
                scale only axis,
                height=2.8cm,
                width=.46\linewidth,
                xmin=0.0, xmax=0.1,
                ymin=-0.01, ymax=0.1,
                xtick distance={.05},
                ylabel near ticks,
                scaled y ticks=false,
                tick label style={/pgf/number format/fixed},
                legend style={font=\small, at={(0.5,0.25)},anchor=north},
                legend columns=-1
                ]
            \addplot+[smooth,mark=none, blue, thick] table[x index=0,y index=3] {data/estimation_rs_gt2.dat}; \label{line:iid}
            \addplot+[smooth,mark=none, red, thick] table[x index=0,y index=4] {data/estimation_rs_gt2.dat}; \label{line:B1}
            \end{axis}
        \end{tikzpicture}%
        ~
        \begin{tikzpicture}[trim axis left]
            \begin{axis}[
                scale only axis,
                height=2.8cm,
                width=.46\linewidth,
                xmin=0.0, xmax=0.1,
                ymin=-0.01, ymax=0.1,
                xtick distance={.05},
                ymajorticks=false,
                ylabel near ticks,
                tick label style={/pgf/number format/fixed},
                legend style={font=\small, at={(0.5,0.25)},anchor=north},
                legend columns=-1
                ]
            \addplot+[smooth,mark=none, blue, thick] table[x index=0,y index=1] {data/estimation_rs_gt2.dat};
            \addplot+[smooth,mark=none, red, thick] table[x index=0,y index=2] {data/estimation_rs_gt2.dat};
            \end{axis}
        \end{tikzpicture}
        \caption{Using GT2}
    \end{subfigure}
    \caption{Average appearance model estimation error ($D_B$) as a
      function of $\rho$ on images composed of IID (\ref{line:iid})
      and Brodatz (\ref{line:B1}) patterns disposed as in GT1 and GT2.
      For both (a) and (b) the results on the left are from the
      algebraic method, whereas the results on the right are from the
      spectral method.}
      \label{fig:example_estimation_rs}
\end{figure}

\subsection{Appearance Model Evaluation on Synthetic Images}

We compare the performance of our methods for estimating appearance
models to a variation of the iterative scheme described in
\cite{tang2014pseudo}, here called ALT.

In ALT, we start with an initial segmentation of the image and
alternate between computing new appearance models using the current
segmentation and computing a new segmentation using the current
appearance models.  This procedure is iterated until convergence.  To
update the appearance models using the current segmentation we
histogram the pixel values in each region.  We ``smooth'' the
histograms by adding a constant $K = 1$ to their bins before
normalizing them.  To update the segmentation using the current
appearance models we minimize the energy in Equation (\ref{eq:E_seg})
using a max-flow/min-cut algorithm
(\cite{greig1989exact,boykov1999fast}).

For the experiments described here the initial segmentation used for
ALT is defined by a square region in the middle of the image.  Figure
\ref{fig:alt_estimation} shows an example of how the segmentation and
appearance models evolve over time.  Empirically, we found that ALT
works well in many examples but a typical failure mode leads to
assigning the whole image to single segment.

\begin{figure}[t]
    \centering
    \hspace{1pt}
    \begin{subfigure}[t]{0.18\linewidth}
        \includegraphics[width=\linewidth]{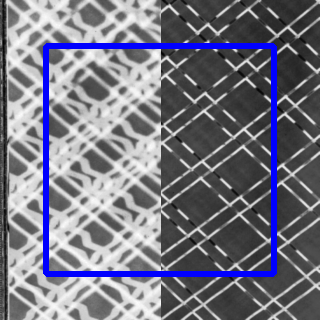}
    \end{subfigure}%
    ~~
    \begin{subfigure}[t]{0.18\linewidth}
        \includegraphics[width=\linewidth]{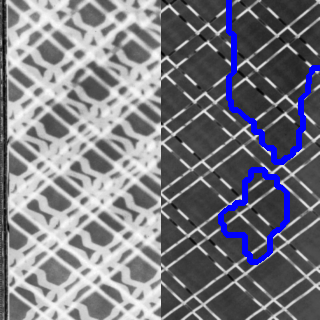}
    \end{subfigure}%
    ~~   
    \begin{subfigure}[t]{0.18\linewidth}
        \includegraphics[width=\linewidth]{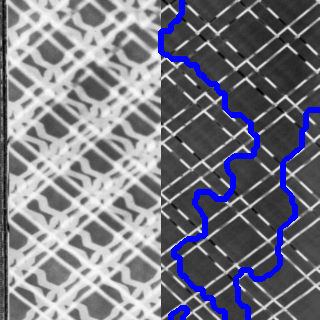}
    \end{subfigure}%
    ~~
    \begin{subfigure}[t]{0.18\linewidth}
        \includegraphics[width=\linewidth]{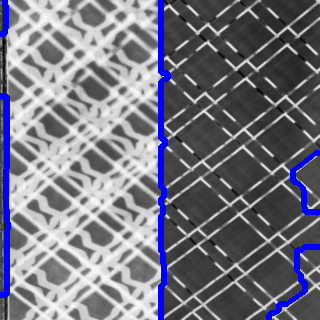}
    \end{subfigure}%
    ~~
    \begin{subfigure}[t]{0.18\linewidth}
        \includegraphics[width=\linewidth]{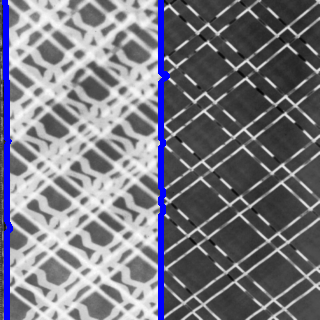}
    \end{subfigure}
    \vspace{5pt}
    
    \begin{subfigure}[t]{0.18\linewidth}
        \centering
        \begin{tikzpicture}[trim axis left]
            \begin{axis}[
                scale only axis,
                height=1.5cm,
                width=\linewidth,
                max space between ticks=40,
                xmin=1, xmax=241,
                ymax=.065,
                ticks=none,
                ylabel near ticks,
                legend style={font=\small, at={(0.5,0.25)},anchor=north},
                legend columns=-1
                ]            
            \addplot+[smooth,mark=none, red,  thick] table[x index=0,y index=2] {data/alt_evolution/thetas_6-7.dat}; 
            \addplot+[smooth,mark=none, blue,  thick] table[x index=0,y index=3] {data/alt_evolution/thetas_6-7.dat}; 
            \end{axis}
        \end{tikzpicture}
    \end{subfigure}%
    ~~
    \begin{subfigure}[t]{0.18\linewidth}
        \centering
        \begin{tikzpicture}[trim axis left]
            \begin{axis}[
                scale only axis,
                height=1.5cm,
                width=\linewidth,
                max space between ticks=40,
                xmin=1, xmax=241,
                ymax=.065,
                ticks=none,
                ylabel near ticks,
                legend style={font=\small, at={(0.5,0.25)},anchor=north},
                legend columns=-1
                ]
            \addplot+[smooth,mark=none, red,  thick] table[x index=0,y index=2] {data/alt_evolution/thetas_6-7.dat}; 
            \addplot+[smooth,mark=none, blue,  thick] table[x index=0,y index=7] {data/alt_evolution/thetas_6-7.dat};
            \end{axis}
        \end{tikzpicture}
        \captionsetup{justification=centering}
    \end{subfigure}%
    ~~
    \begin{subfigure}[t]{0.18\linewidth}
        \centering
        \begin{tikzpicture}[trim axis left]
            \begin{axis}[
                scale only axis,
                height=1.5cm,
                width=\linewidth,
                max space between ticks=40,
                xmin=1, xmax=241,
                ymax=.065,
                ticks=none,
                ylabel near ticks,
                legend style={font=\small, at={(0.5,0.25)},anchor=north},
                legend columns=-1
                ]
            \addplot+[smooth,mark=none, red,  thick] table[x index=0,y index=2] {data/alt_evolution/thetas_6-7.dat}; 
            \addplot+[smooth,mark=none, blue,  thick] table[x index=0,y index=13] {data/alt_evolution/thetas_6-7.dat};  
            \end{axis}
        \end{tikzpicture}
        \captionsetup{justification=centering}
    \end{subfigure}%
    ~~
    \begin{subfigure}[t]{0.18\linewidth}
        \centering
        \begin{tikzpicture}[trim axis left]
            \begin{axis}[
                scale only axis,
                height=1.5cm,
                width=\linewidth,
                max space between ticks=40,
                xmin=1, xmax=241,
                ymax=.065,
                ticks=none,
                ylabel near ticks,
                legend style={font=\small, at={(0.5,0.25)},anchor=north},
                legend columns=-1
                ]
            \addplot+[smooth,mark=none, red,  thick] table[x index=0,y index=2] {data/alt_evolution/thetas_6-7.dat}; 
            \addplot+[smooth,mark=none, blue,  thick] table[x index=0,y index=19] {data/alt_evolution/thetas_6-7.dat};  
            \end{axis}
        \end{tikzpicture}
        \captionsetup{justification=centering}
    \end{subfigure}%
    ~~
    \begin{subfigure}[t]{0.18\linewidth}
        \centering
        \begin{tikzpicture}[trim axis left]
            \begin{axis}[
                scale only axis,
                height=1.5cm,
                width=\linewidth,
                max space between ticks=40,
                xmin=1, xmax=241,
                ymax=.065,
                ticks=none,
                ylabel near ticks,
                legend style={font=\small, at={(0.5,0.25)},anchor=north},
                legend columns=-1
                ]
            \addplot+[smooth,mark=none, red,  thick] table[x index=0,y index=2] {data/alt_evolution/thetas_6-7.dat};
            \addplot+[smooth,mark=none, blue,  thick] table[x index=0,y index=21] {data/alt_evolution/thetas_6-7.dat};
            \end{axis}
        \end{tikzpicture}
        \captionsetup{justification=centering}
    \end{subfigure}
    
    \begin{subfigure}[t]{0.18\linewidth}
        \centering
        \begin{tikzpicture}[trim axis left]
            \begin{axis}[
                scale only axis,
                height=1.5cm,
                width=\linewidth,
                max space between ticks=40,
                xmin=1, xmax=241,
                ymax=.065,
                ticks=none,
                ylabel near ticks,
                legend style={font=\small, at={(0.5,0.25)},anchor=north},
                legend columns=-1
                ]
            \addplot+[smooth,mark=none, red,  thick] table[x index=0,y index=1] {data/alt_evolution/thetas_6-7.dat}; 
            \addplot+[smooth,mark=none, blue,  thick] table[x index=0,y index=4] {data/alt_evolution/thetas_6-7.dat};  
            \end{axis}
        \end{tikzpicture}
        \caption*{Initial \\ ($D_B = 0.71$)}
    \end{subfigure}%
    ~~
    \begin{subfigure}[t]{0.18\linewidth}
        \centering
        \begin{tikzpicture}[trim axis left]
            \begin{axis}[
                scale only axis,
                height=1.5cm,
                width=\linewidth,
                max space between ticks=40,
                xmin=1, xmax=241,
                ymax=.065,
                ticks=none,
                ylabel near ticks,
                legend style={font=\small, at={(0.5,0.25)},anchor=north},
                legend columns=-1
                ]
            \addplot+[smooth,mark=none, red,  thick] table[x index=0,y index=1] {data/alt_evolution/thetas_6-7.dat}; 
            \addplot+[smooth,mark=none, blue,  thick] table[x index=0,y index=8] {data/alt_evolution/thetas_6-7.dat};  
            \end{axis}
        \end{tikzpicture}
        \captionsetup{justification=centering}
        \caption*{3rd iteration \\ ($D_B = 0.46$)}
    \end{subfigure}%
    ~~
    \begin{subfigure}[t]{0.18\linewidth}
        \centering
        \begin{tikzpicture}[trim axis left]
            \begin{axis}[
                scale only axis,
                height=1.5cm,
                width=\linewidth,
                max space between ticks=40,
                xmin=1, xmax=241,
                ymax=.065,
                ticks=none,
                ylabel near ticks,
                legend style={font=\small, at={(0.5,0.25)},anchor=north},
                legend columns=-1
                ]
            \addplot+[smooth,mark=none, red,  thick] table[x index=0,y index=1] {data/alt_evolution/thetas_6-7.dat}; 
            \addplot+[smooth,mark=none, blue,  thick] table[x index=0,y index=12] {data/alt_evolution/thetas_6-7.dat};
            \end{axis}
        \end{tikzpicture}
        \captionsetup{justification=centering}
        \caption*{6th iteration \\ ($D_B = 0.29$)}
    \end{subfigure}%
    ~~
    \begin{subfigure}[t]{0.18\linewidth}
        \centering
        \begin{tikzpicture}[trim axis left]
            \begin{axis}[
                scale only axis,
                height=1.5cm,
                width=\linewidth,
                max space between ticks=40,
                xmin=1, xmax=241,
                ymax=.065,
                ticks=none,
                ylabel near ticks,
                legend style={font=\small, at={(0.5,0.25)},anchor=north},
                legend columns=-1
                ]
            \addplot+[smooth,mark=none, red,  thick] table[x index=0,y index=1] {data/alt_evolution/thetas_6-7.dat}; 
            \addplot+[smooth,mark=none, blue,  thick] table[x index=0,y index=18] {data/alt_evolution/thetas_6-7.dat};
            \end{axis}
        \end{tikzpicture}
        \captionsetup{justification=centering}
        \caption*{9th iteration \\ ($D_B = 0.05$)}
    \end{subfigure}%
    ~~
    \begin{subfigure}[t]{0.18\linewidth}
        \centering
        \begin{tikzpicture}[trim axis left]
            \begin{axis}[
                scale only axis,
                height=1.5cm,
                width=\linewidth,
                max space between ticks=40,
                xmin=1, xmax=241,
                ymax=.065,
                ticks=none,
                ylabel near ticks,
                legend style={font=\small, at={(0.5,0.25)},anchor=north},
                legend columns=-1
                ]
            \addplot+[smooth,mark=none, red,  thick] table[x index=0,y index=1] {data/alt_evolution/thetas_6-7.dat}; \label{line:dist_gt}
            \addplot+[smooth,mark=none, blue,  thick] table[x index=0,y index=22] {data/alt_evolution/thetas_6-7.dat}; \label{line:dist_est}
            
            \end{axis}
        \end{tikzpicture}
        \captionsetup{justification=centering}
        \caption*{Final iteration \\ ($D_B = 0.03$)}
    \end{subfigure}%
    
    \caption{Model estimation and segmentation using ALT. On the
      bottom, we see how both foreground and background color
      distributions estimated by ALT (\ref{line:dist_est}) evolve
      compared to the ground truth appearance models
      (\ref{line:dist_gt}). The evolution of the segmentations given
      the models is shown on top.}
    \label{fig:alt_estimation}
\end{figure}

Table~\ref{tab:quantitative_results_estimation} compares the results
of our methods to the result of ALT using several values of $\lambda$
for the segmentation step.  We used $\rho = 0.06$ (which corresponds
to $r \approx 20$ pixels for the $320 \times 320$ synthetic images)
for both the algebraic and spectral methods.  We evaluated our
algorithms using three different approaches for selecting $w_0$, $w_1$
and $\epsilon$.  In the first approach we set the parameters to the
their ground truth values defined by the corresponding segmentation
mask.  In the second approach we fix the parameters to typical values
that work well for many images.  In the third approach we search over
the parameters explicitly (see Section~\ref{sec:w0w1e}).  All of the
approaches lead to good results but searching for the optimal
parameters leads to a significant increase in runtime for the
algebraic method.

We see that our algorithms perform extremely well on images where the
pixel values in each region are IID.  The results on images with
textures are also good and compare favorably to ALT. This result is
compelling in particular because the proposed methods do not rely on
an iterative model re-estimation scheme such as in ALT, which makes
them faster and independent of initialization.  The average runtime of
the different methods are shown in the last column of Table
\ref{tab:quantitative_results_estimation}.

\begin{table}
\setlength{\fboxrule}{.5pt}
\setlength{\fboxsep}{0pt}
\centering
\caption{Average $D_B$ distance (lower is better) between estimated and ground truth
  appearance models on the synthetic data generated using different
  segmentation masks.  We evaluate our algorithms using different
  methods for selecting $w_0$, $w_1$ and $\epsilon$ (see text).}
  \label{tab:quantitative_results_estimation}
\begin{adjustbox}{max width=\textwidth}
\begin{tabular}{ccccccccccccc}
&&\multicolumn{10}{c}{\textbf{Image Setting}}& \\
\cmidrule(lr){3-12}
&&
\multicolumn{2}{c}{GT1} &
\multicolumn{2}{c}{GT2} &
\multicolumn{2}{c}{GT3} &
\multicolumn{2}{c}{GT4} &
\multicolumn{2}{c}{GT5} &\\
\cmidrule(lr){3-4} \cmidrule(lr){5-6} \cmidrule(lr){7-8} \cmidrule(lr){9-10}\cmidrule(lr){11-12}
 \multicolumn{2}{c}{\textbf{Method}} & IID & Brodatz  & IID & Brodatz  & IID & Brodatz  & IID & Brodatz & IID & Brodatz  & \textbf{Time (s)}\\\midrule[1pt]

\multirow{3}{*}{\shortstack[c]{\textit{Algebraic}}}
&\shortstack[c]{GT $w_0,w_1,\epsilon$} &
0.000 & 0.003 & 0.001 & 0.028 & 0.000 & 0.006 & 0.001 & 0.030 & 0.000 & 0.007&0.23
\\ 
    
&\shortstack[c]{$w_0, w_1 = 0.5, \epsilon = 0.5 \rho$} &
0.000 & 0.003 & 0.019 & 0.064 & 0.001 & 0.007 & 0.016 & 0.058 & 0.002 & 0.011 &0.14
\\

&\shortstack[c]{Search $w_0,w_1,\epsilon$} &
0.000 & 0.010 & 0.002 & 0.040 & 0.002 & 0.017 & 0.003 & 0.032 & 0.003  & 0.021 & 7.81
\\ 

\midrule[1pt]
\multirow{3}{*}{\shortstack[c]{\textit{Spectral}}}
&\shortstack[c]{GT $w_0,w_1,\epsilon$} &
0.000 & 0.002 & 0.001 & 0.034 & 0.000 & 0.006 & 0.001 & 0.032 & 0.000 & 0.007 &0.06
\\ 

&\shortstack[c]{$w_0, w_1 = 0.5, \epsilon = 0.5 \rho$} &
0.000 & 0.003 & 0.019 & 0.067 & 0.001 & 0.007 & 0.016 & 0.059 & 0.002 & 0.011 &0.06
\\

&\shortstack[c]{Search $w_0,w_1,\epsilon$} &
0.000 & 0.008 & 0.001 & 0.034 & 0.002 & 0.016 & 0.003 & 0.040 & 0.003  & 0.020 &0.16
    \\ \midrule[1pt]
\multirow{4}{*}{{ALT}}
& $\lambda = 1$  
 & 0.044  & 0.091  & 0.000  & 0.112  & 0.000  & 0.079  & 0.000  & 0.098 & 0.000  & 0.083 & 3.70 \\ 
& $\lambda = 3$   
 & 0.043  & 0.019  & 0.000  & 0.035  & 0.000  & 0.021  & 0.000  & 0.033 & 0.000  & 0.026 & 3.26 \\ 
& $\lambda = 5$   
 & 0.043  & 0.020  & 0.005  & 0.014  & 0.000  & 0.004  & 0.032  & 0.019 & 0.042  & 0.016 & 3.29 \\ 
& $\lambda = 10$  
 & 0.043  & 0.073  & 0.031  & 0.033  & 0.027  & 0.008  & 0.032  & 0.046 & 0.042  & 0.055 & 3.04
\end{tabular}
\end{adjustbox}
\end{table}

\subsection{Segmentation Evaluation on Synthetic Images}

After estimating appearance models using either the algebraic or
spectral methods we compute segmentations by minimizing Equation
(\ref{eq:E_seg}) using a max-flow/min-cut algorithm
(\cite{greig1989exact,boykov1999fast}).  We compared this approach to
several texture segmentation methods.

The methods we compare to include Level Set Segmentation using
Wasserstein Distances (LSWD) \cite{ni2009local}, Images as Occlusions
of Textures (ORTSEG) \cite{mccann2014images} and Factorization Based
Segmentation (FBS) \cite{yuan2015factorization}.  For each of these
methods, we used the Matlab implementations provided by the authors.
We tuned the parameters of each method to improve their performance in
our dataset.  We also evaluate the segmentation results obtained with
the iterative scheme ALT described above.

All of the methods we have used for comparison assume either
explicitly or implicitly that regions have homogeneous appearance.
FBS uses a filter bank to define local features, while LSWD and ORTSEG
work with raw pixel values.  LSWD, ORTSEG and FBS require the
selection of a window size parameter that has a function similar to
$r$ in our methods.

We used $\rho = 0.06$ to estimate appearance models with our methods.
We set $w_0$, $w_1$ and $\epsilon$ by searching over the parameters
explicitly (see Section~\ref{sec:w0w1e}).  We compute segmentations
using several choices for $\lambda$ in Equation (\ref{eq:E_seg}) and
evaluate each choice separately.

Figure~\ref{fig:quantitative_results_visuals} illustrates some of the
segmentations obtained using the different methods for both types of
images (IID and Brodatz) used for evaluation.  In these examples we
used $\lambda=5$ to compute segmentations with our methods and in ALT.

Table \ref{tab:quantitative_results_segmentation} provides a
quantitative evaluation on the full set of synthetic images generated
using the procedure described in Section~\ref{sec:synt}.  This is the
same data used to generate the results in Table
\ref{tab:quantitative_results_estimation}.  Notice that the runtime of
our methods is increased for the segmentation experiments
(Table~\ref{tab:quantitative_results_segmentation}) when compared to
the model estimation experiments
(Table~\ref{tab:quantitative_results_estimation}) due to the addition
of the graph cut computation to obtain a segmentation after estimating
appearance models.

Table~\ref{tab:quantitative_results_segmentation} demonstrates a
clear advantage of our methods under the IID case. For the Brodatz
setting, the results demonstrate that our methods provide
high quality segmentations without relying on
iterative approaches and filter banks.  This makes our methods
faster than most of the other approaches, while still leading to
accurate results.

These results confirm the soundness of the assumptions presented in
Section \ref{sec:model}.  They also confirm the efficacy of our
methods for segmenting images with complex textures, despite the fact
that we work directly with raw pixel values. As can be seen in
Figure~\ref{fig:quantitative_results_visuals} this leads to
segmentations that are accurate near region boundaries, where methods
that rely on filter responses often suffer.  Finally, although not
presented here for the sake of simplicity, our methods could have
their segmentation performance further improved when using the
estimated appearance as an initial guess for an iterative scheme such
as ALT.

\begin{figure}[t]
    \centering
    \begin{subfigure}[t]{0.13\textwidth}
        \includegraphics[width=\linewidth]{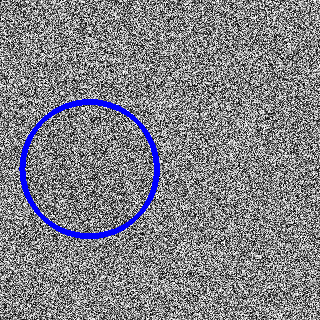}
    \end{subfigure}
    \begin{subfigure}[t]{0.13\textwidth}
        \includegraphics[width=\linewidth]{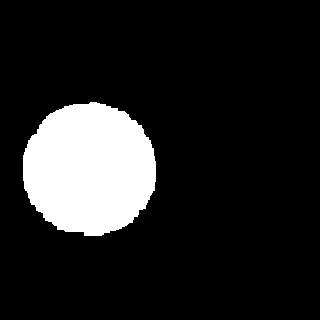}
    \end{subfigure}    
    \begin{subfigure}[t]{0.13\textwidth}
        \includegraphics[width=\linewidth]{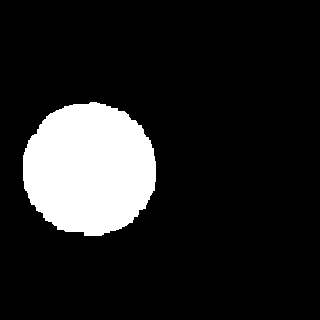}
    \end{subfigure}    
    \begin{subfigure}[t]{0.13\textwidth}
        \includegraphics[width=\linewidth]{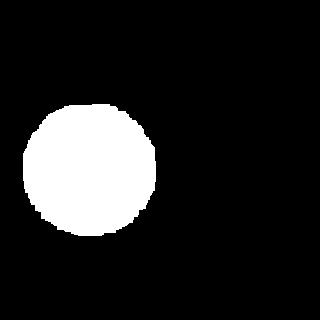}
    \end{subfigure}
    \begin{subfigure}[t]{0.13\textwidth}
        \includegraphics[width=\linewidth]{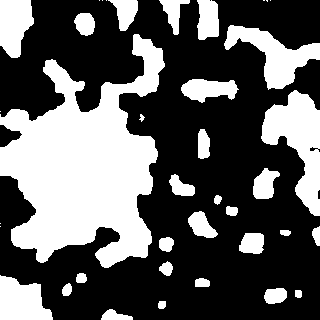}
    \end{subfigure}
    \begin{subfigure}[t]{0.13\textwidth}
        \includegraphics[width=\linewidth]{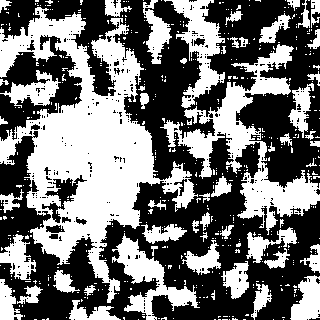}
    \end{subfigure}
    \begin{subfigure}[t]{0.13\textwidth}
        \includegraphics[width=\linewidth]{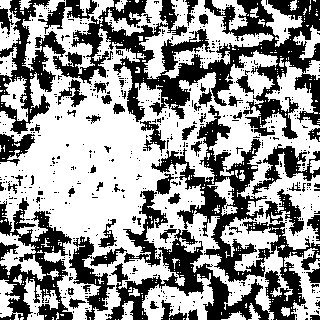}
    \end{subfigure}
    
    \begin{subfigure}[t]{0.13\textwidth}
        \includegraphics[width=\linewidth]{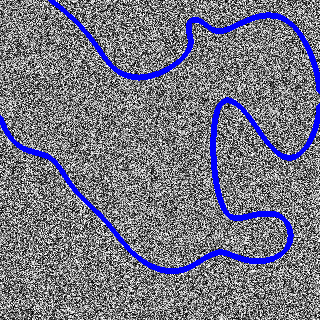}
    \end{subfigure}
    \begin{subfigure}[t]{0.13\textwidth}
        \includegraphics[width=\linewidth]{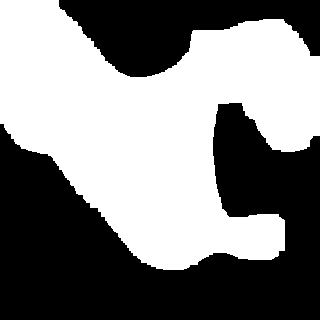}
    \end{subfigure}    
    \begin{subfigure}[t]{0.13\textwidth}
        \includegraphics[width=\linewidth]{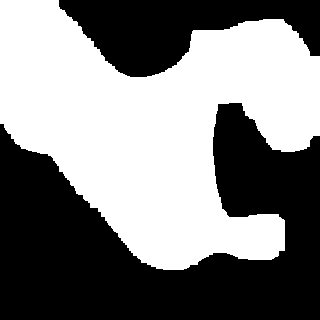}
    \end{subfigure}    
    \begin{subfigure}[t]{0.13\textwidth}
        \includegraphics[width=\linewidth]{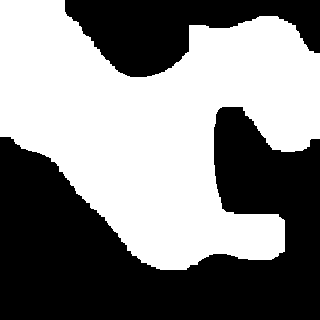}
    \end{subfigure}
    \begin{subfigure}[t]{0.13\textwidth}
        \includegraphics[width=\linewidth]{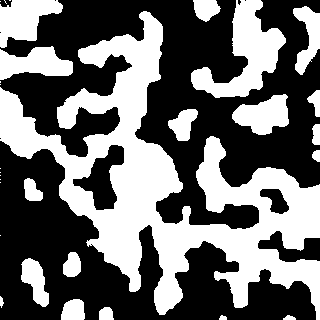}
    \end{subfigure}
    \begin{subfigure}[t]{0.13\textwidth}
        \includegraphics[width=\linewidth]{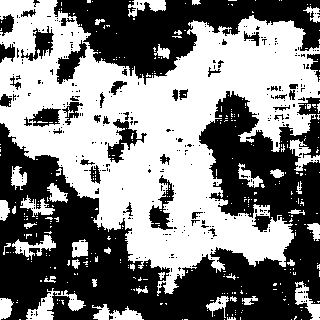}
    \end{subfigure}
    \begin{subfigure}[t]{0.13\textwidth}
        \includegraphics[width=\linewidth]{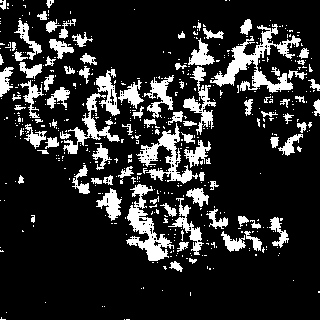}
    \end{subfigure}

    \begin{subfigure}[t]{0.13\textwidth}
        \includegraphics[width=\linewidth]{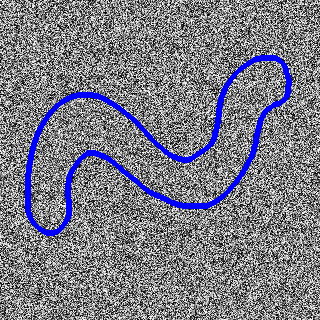}
    \end{subfigure}
    \begin{subfigure}[t]{0.13\textwidth}
        \includegraphics[width=\linewidth]{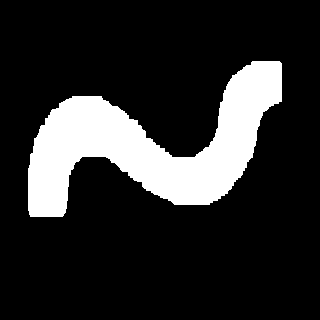}
    \end{subfigure}    
    \begin{subfigure}[t]{0.13\textwidth}
        \includegraphics[width=\linewidth]{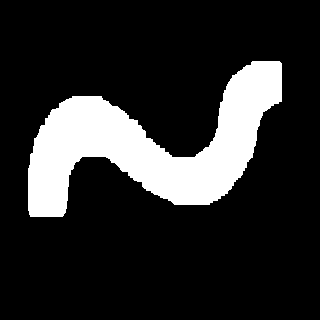}
    \end{subfigure}    
    \begin{subfigure}[t]{0.13\textwidth}
        \includegraphics[width=\linewidth]{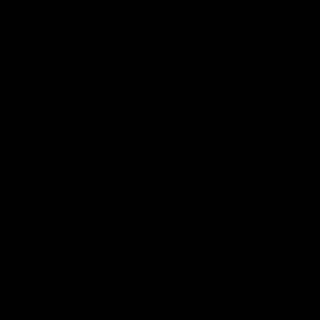}
    \end{subfigure}
    \begin{subfigure}[t]{0.13\textwidth}
        \includegraphics[width=\linewidth]{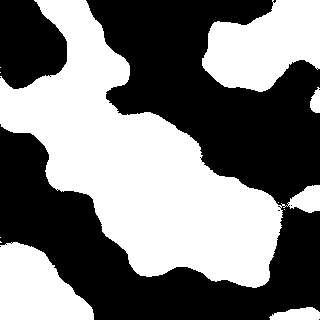}
    \end{subfigure}
    \begin{subfigure}[t]{0.13\textwidth}
        \includegraphics[width=\linewidth]{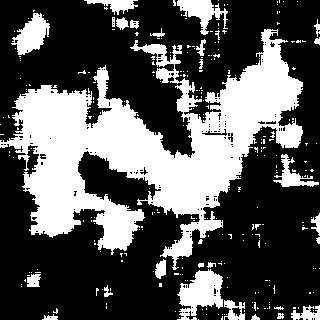}
    \end{subfigure}
    \begin{subfigure}[t]{0.13\textwidth}
        \includegraphics[width=\linewidth]{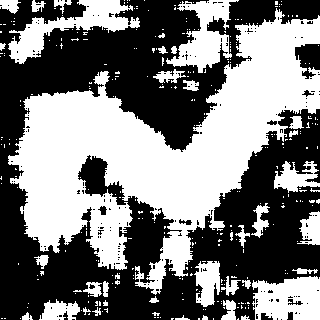}
    \end{subfigure}
    
    \begin{subfigure}[t]{0.13\textwidth}
        \includegraphics[width=\linewidth]{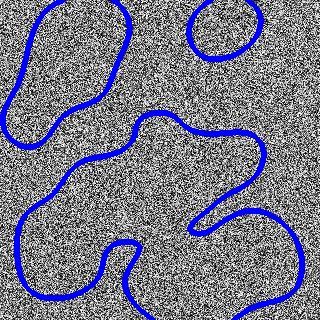}
    \end{subfigure}
    \begin{subfigure}[t]{0.13\textwidth}
        \includegraphics[width=\linewidth]{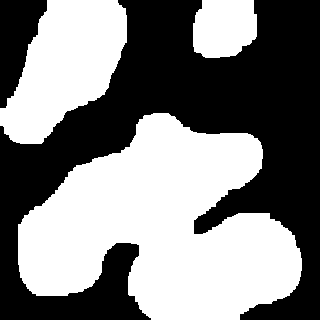}
    \end{subfigure}    
    \begin{subfigure}[t]{0.13\textwidth}
        \includegraphics[width=\linewidth]{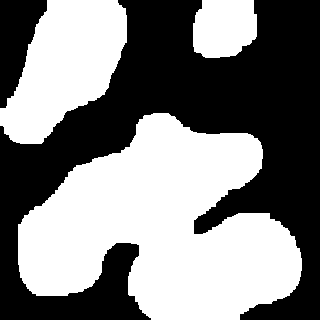}
    \end{subfigure}    
    \begin{subfigure}[t]{0.13\textwidth}
        \includegraphics[width=\linewidth]{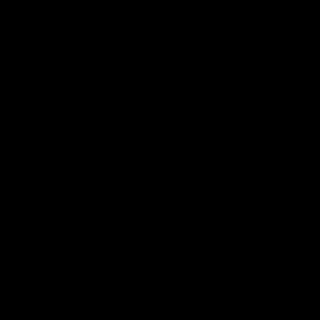}
    \end{subfigure}
    \begin{subfigure}[t]{0.13\textwidth}
        \includegraphics[width=\linewidth]{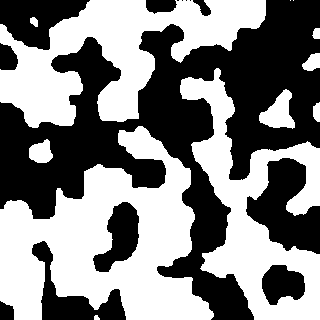}
    \end{subfigure}
    \begin{subfigure}[t]{0.13\textwidth}
        \includegraphics[width=\linewidth]{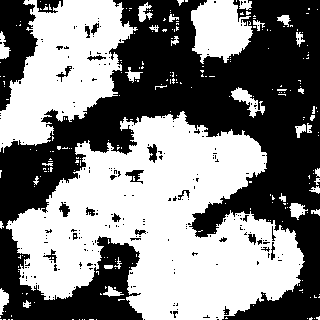}
    \end{subfigure}
    \begin{subfigure}[t]{0.13\textwidth}
        \includegraphics[width=\linewidth]{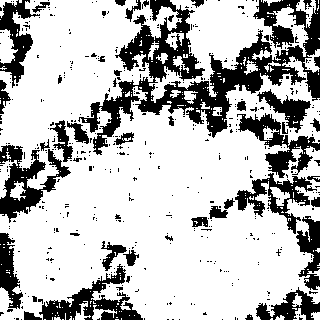}
    \end{subfigure}

    \begin{subfigure}[t]{0.13\textwidth}
        \includegraphics[width=\linewidth]{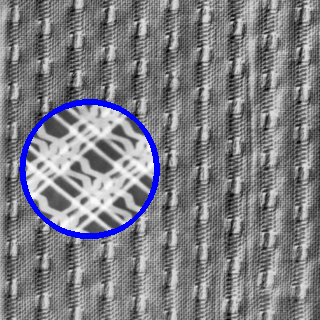}
    \end{subfigure}
    \begin{subfigure}[t]{0.13\textwidth}
        \includegraphics[width=\linewidth]{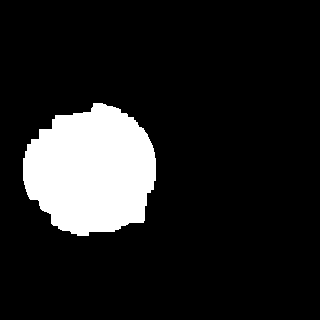}
    \end{subfigure}    
    \begin{subfigure}[t]{0.13\textwidth}
        \includegraphics[width=\linewidth]{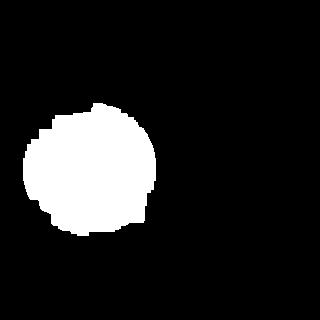}
    \end{subfigure}    
    \begin{subfigure}[t]{0.13\textwidth}
        \includegraphics[width=\linewidth]{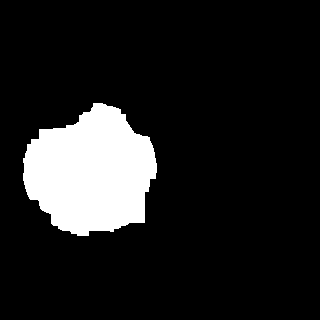}
    \end{subfigure}
    \begin{subfigure}[t]{0.13\textwidth}
        \includegraphics[width=\linewidth]{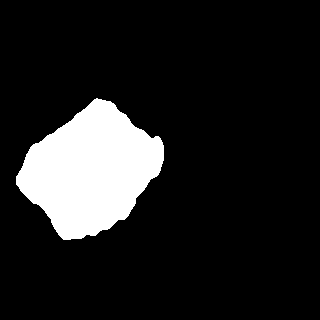}
    \end{subfigure}
    \begin{subfigure}[t]{0.13\textwidth}
        \includegraphics[width=\linewidth]{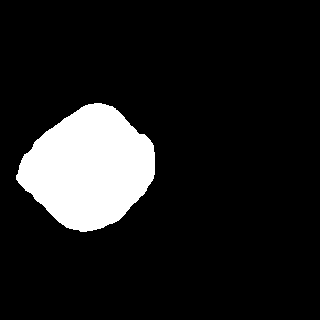}
    \end{subfigure}
    \begin{subfigure}[t]{0.13\textwidth}
        \includegraphics[width=\linewidth]{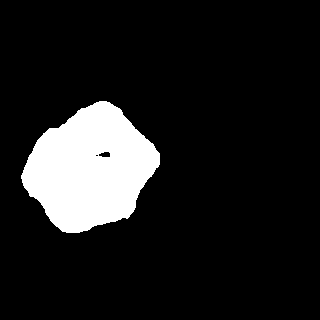}
    \end{subfigure}
    
    \begin{subfigure}[t]{0.13\textwidth}
        \includegraphics[width=\linewidth]{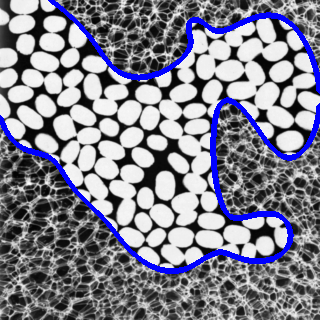}
    \end{subfigure}
    \begin{subfigure}[t]{0.13\textwidth}
        \includegraphics[width=\linewidth]{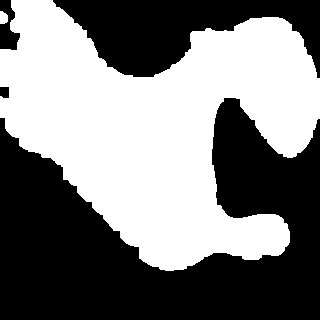}
    \end{subfigure}    
    \begin{subfigure}[t]{0.13\textwidth}
        \includegraphics[width=\linewidth]{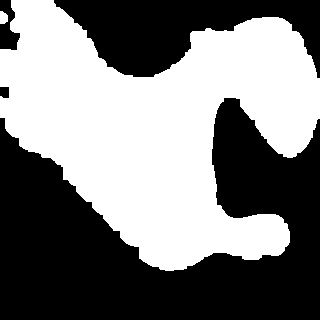}
    \end{subfigure}    
    \begin{subfigure}[t]{0.13\textwidth}
        \includegraphics[width=\linewidth]{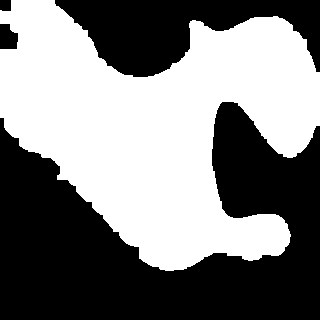}
    \end{subfigure}
    \begin{subfigure}[t]{0.13\textwidth}
        \includegraphics[width=\linewidth]{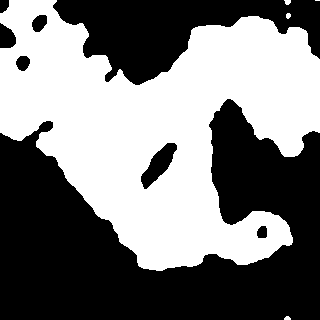}
    \end{subfigure}
    \begin{subfigure}[t]{0.13\textwidth}
        \includegraphics[width=\linewidth]{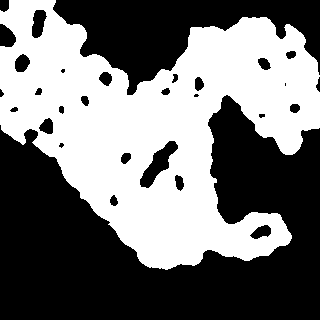}
    \end{subfigure}
    \begin{subfigure}[t]{0.13\textwidth}
        \includegraphics[width=\linewidth]{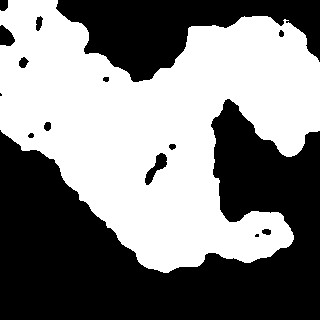}
    \end{subfigure}
    
    \begin{subfigure}[t]{0.13\textwidth}
            \includegraphics[width=\linewidth]{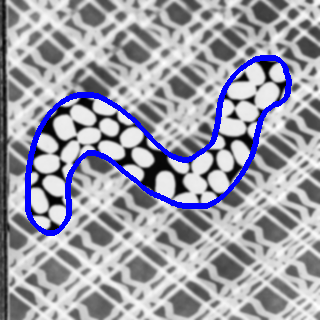}
    \end{subfigure}
    \begin{subfigure}[t]{0.13\textwidth}
            \includegraphics[width=\linewidth]{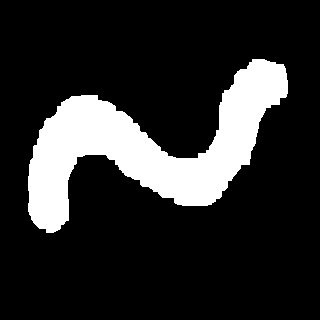}
    \end{subfigure}
    \begin{subfigure}[t]{0.13\textwidth}
            \includegraphics[width=\linewidth]{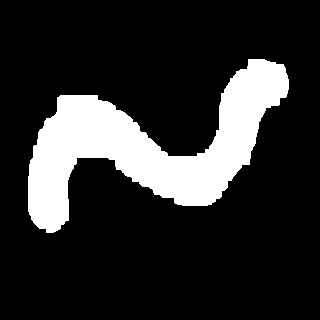}
    \end{subfigure}
    \begin{subfigure}[t]{0.13\textwidth}
        \includegraphics[width=\linewidth]{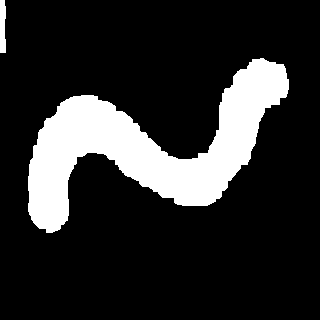}
    \end{subfigure}
    \begin{subfigure}[t]{0.13\textwidth}
        \includegraphics[width=\linewidth]{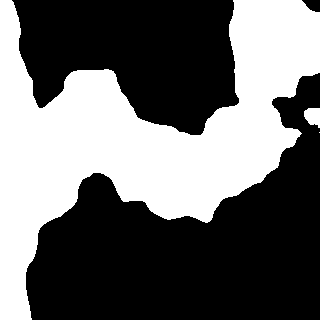}
    \end{subfigure}
    \begin{subfigure}[t]{0.13\textwidth}
        \includegraphics[width=\linewidth]{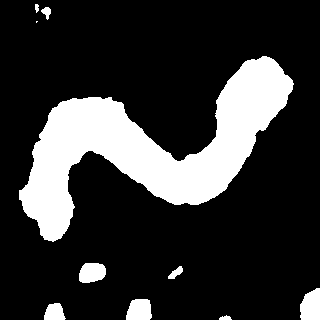}
    \end{subfigure}
    \begin{subfigure}[t]{0.13\textwidth}
        \includegraphics[width=\linewidth]{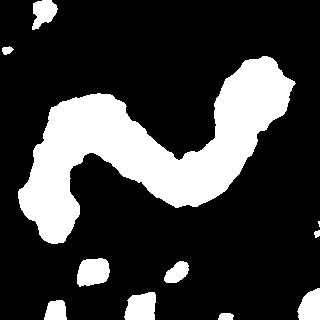}
    \end{subfigure}
    
    \begin{subfigure}[t]{0.13\textwidth}
            \includegraphics[width=\linewidth]{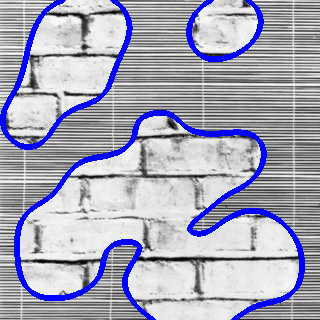}
            \caption{Original}
    \end{subfigure}
    \begin{subfigure}[t]{0.13\textwidth}
            \includegraphics[width=\linewidth]{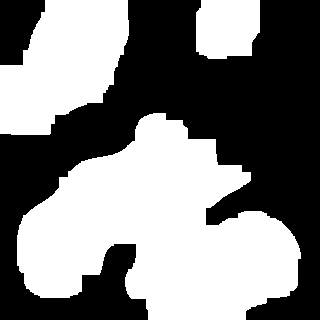}
            \caption{Algebraic}
    \end{subfigure}
    \begin{subfigure}[t]{0.13\textwidth}
            \includegraphics[width=\linewidth]{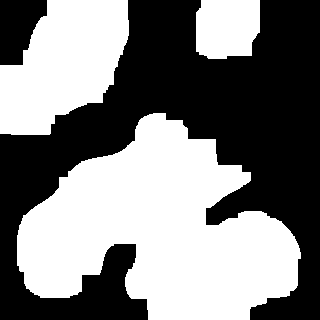}
            \caption{Spectral}
    \end{subfigure}
    \begin{subfigure}[t]{0.13\textwidth}
        \includegraphics[width=\linewidth]{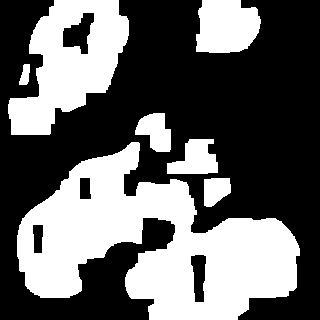}
        \caption{ALT}
    \end{subfigure}
    \begin{subfigure}[t]{0.13\textwidth}
        \includegraphics[width=\linewidth]{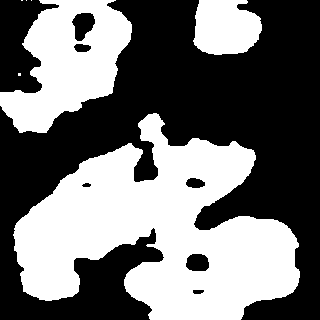}
        \caption{LSWD}
    \end{subfigure}
    \begin{subfigure}[t]{0.13\textwidth}
        \includegraphics[width=\linewidth]{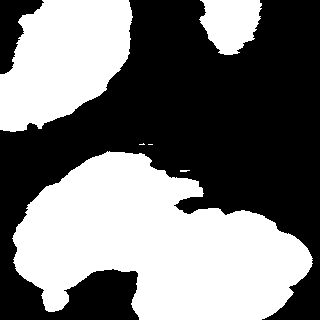}
        \caption{ORTSEG}
    \end{subfigure}
    \begin{subfigure}[t]{0.13\textwidth}
        \includegraphics[width=\linewidth]{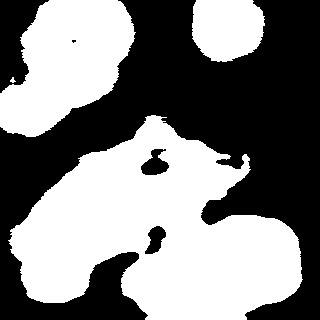}
        \caption{FBS}
    \end{subfigure}

    \caption{Selected segmentation results comparing our algorithms in
      (b) and (c) to other methods.}
    \label{fig:quantitative_results_visuals}
\end{figure}

\begin{table}
\setlength{\fboxrule}{.5pt}
\setlength{\fboxsep}{0pt}
\centering
\caption{Average $\jac$ index (higher is better) of different segmentation methods on the
  synthetic data generated using different segmentation masks.}
  \label{tab:quantitative_results_segmentation}
\begin{adjustbox}{max width=\textwidth}
\begin{tabular}{ccccccccccccc}
&&\multicolumn{10}{c}{\textbf{Image Setting}}& \\
\cmidrule(lr){3-12}
&&
\multicolumn{2}{c}{GT1} &
\multicolumn{2}{c}{GT2} &
\multicolumn{2}{c}{GT3} &
\multicolumn{2}{c}{GT4} &
\multicolumn{2}{c}{GT5} &\\
\cmidrule(lr){3-4} \cmidrule(lr){5-6} \cmidrule(lr){7-8} \cmidrule(lr){9-10}\cmidrule(lr){11-12}
\textbf{Method} & $\lambda$ & IID & Brodatz  & IID & Brodatz  & IID & Brodatz  & IID & Brodatz & IID & Brodatz  & \textbf{Time (s)}\\\midrule[1pt]
 
\multirow{4}{*}{\textit{Algebraic}}
 & 3 
 &1.000  & 0.896  & 0.991  & 0.789  & 0.977  & 0.869  & 0.980  & 0.780  & 0.956  & 0.850 &  8.13 \\ 
 & 5 
 & 1.000  & 0.919  & 0.989  & 0.814  & 0.955  & 0.873  & 0.966  & 0.787 & 0.884  & 0.841 & 8.15 \\ 
 & 7 
 & 1.000  & 0.937  & 0.986  & 0.838  & 0.919  & 0.856  & 0.886  & 0.782 & 0.708  & 0.811 &  8.22 \\ 
 & 10
 & 1.000  & 0.936  & 0.976  & 0.860  & 0.793  & 0.837  & 0.826  & 0.777 & 0.570  & 0.767 & 8.31 \\ 
 \midrule[1pt]
\multirow{4}{*}{\textit{Spectral}}
 & 3  
 & 1.000  & 0.894  & 0.991  & 0.777  & 0.977  & 0.863  & 0.980  & 0.780& 0.958  & 0.846 & 0.53 \\ 
 & 5  
 & 1.000  & 0.918  & 0.989  & 0.805  & 0.957  & 0.862  & 0.965  & 0.795& 0.898  & 0.841 & 0.57 \\ 
 & 7  
 & 1.000  & 0.934  & 0.986  & 0.819  & 0.918  & 0.845  & 0.884  & 0.783& 0.759  & 0.808 & 0.64 \\ 
 & 10  
 & 1.000  & 0.930  & 0.974  & 0.850  & 0.804  & 0.824  & 0.823  & 0.783& 0.590  & 0.771 & 0.72 \\ 
 \midrule[1pt]

\multirow{4}{*}{{ALT}}
& 1  
 & 0.467  & 0.683  & 0.990  & 0.591  & 0.986  & 0.700  & 0.984  & 0.620& 0.982  & 0.694  & 3.70 \\ 
& 3  
 & 0.500  & 0.874  & 0.991  & 0.761  & 0.981  & 0.854  & 0.977  & 0.757& 0.974  & 0.820 & 3.26 \\ 
& 5  
 & 0.500  & 0.874  & 0.858  & 0.843  & 0.966  & 0.904  & 0.187  & 0.785& 0.500  & 0.835 & 3.29 \\ 
& 10 
 & 0.500  & 0.673  & 0.141  & 0.784  & 0.551  & 0.842  & 0.177  & 0.631& 0.500  & 0.668  & 3.04 \\ \midrule[1pt]
LSWD  
& --  & 0.936  & 0.959  & 0.602  & 0.737  & 0.805  & 0.844  & 0.576  & 0.669  & 0.718  & 0.776 & 89.35 \\ \midrule[1pt]

ORTSEG  
& -- & 0.804  & 0.935  & 0.785  & 0.773  & 0.761  & 0.883  & 0.766  & 0.762& 0.719  & 0.851 & 1.53 \\ \midrule[1pt]

FBS 
& -- & 0.585  & 0.908  & 0.582  & 0.734  & 0.549  & 0.842  & 0.581  & 0.700& 0.547  & 0.810 & 0.08 \\ 

\end{tabular}
\end{adjustbox}
\end{table}

\subsection{Real Images}\label{sec:realimgs}

We also tested the proposed algorithms on real images from a variety
of datasets, including the Berkeley Segmentation Dataset
\cite{martin01}, the Plant Seedlings Dataset
\cite{giselsson2017public} and a Scanning Electron Microscope (SEM)
dataset \cite{semdataset2018}. The images were chosen such that
Assumption \ref{as:homogeneity} approximately holds.

Figure \ref{fig:real_images} shows some of the results obtained using
our methods for estimating appearance models followed by segmentation
using graph cuts.  For each image, we used $\rho = 0.03$ and $\lambda
= 5$.  These results illustrate how the proposed algorithms work well
on a variety of different types of images.

For these experiments we added a pre-processing step to our algorithms
to reduce the total number of colors in RGB images to a smaller number
of quantized values.  This is necessary in order to obtain good
estimates for $\alpha$ and $\beta$ on images with limited resolution.

To quantize the colors in an RGB image we repeatedly partition the
color space until each partition has at most 1000 pixels.  Starting
from the whole set of pixels, we partition the set into two using a
random hyperplane in RGB space going through the center of mass of the
set.  We recurse this procedure until the stopping criteria is met.
The same approach could be used for vector valued images such as
hyperspectral images that arise in remote sensing applications.

\begin{figure}[t]
    \centering
    \begin{subfigure}[t]{\linewidth}
        \centering
        \includegraphics[width=.239\linewidth]{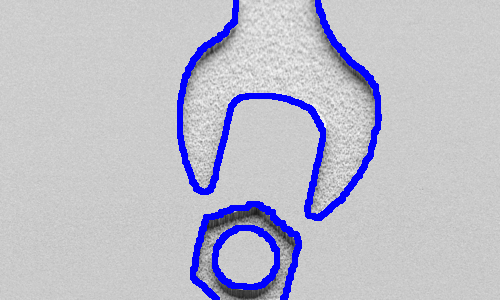}
        \includegraphics[width=.239\linewidth]{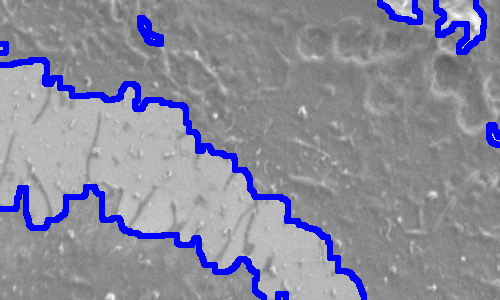}
        \includegraphics[width=.239\linewidth]{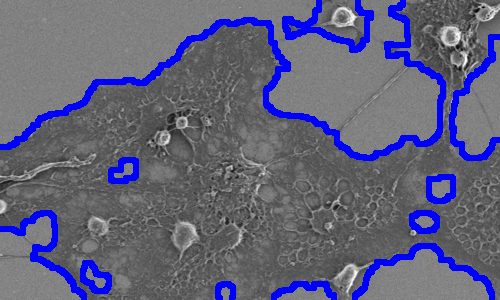}
        \includegraphics[width=.239\linewidth]{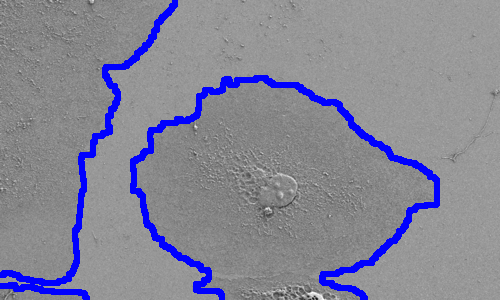}\\[5pt]
        \includegraphics[width=.239\linewidth]{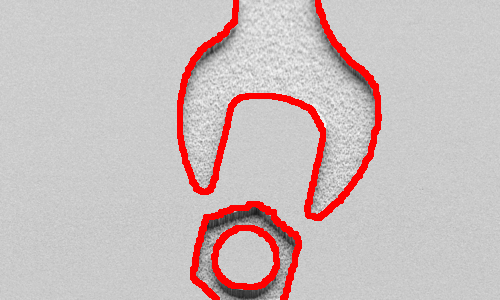}
        \includegraphics[width=.239\linewidth]{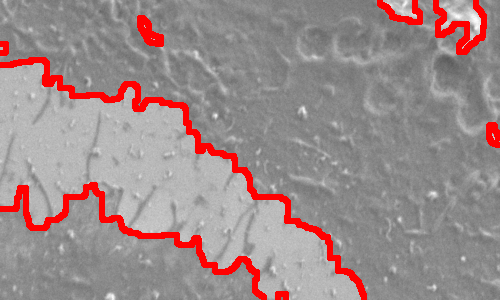}
        \includegraphics[width=.239\linewidth]{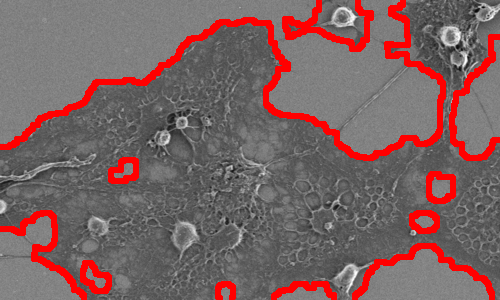}
        \includegraphics[width=.239\linewidth]{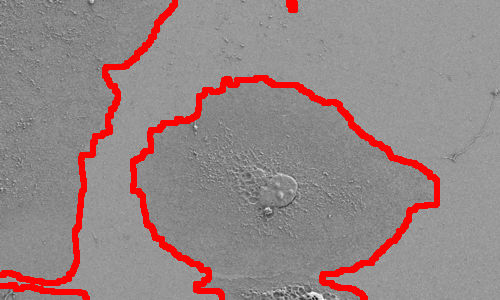}
        \caption{\cite{semdataset2018}}
    \end{subfigure}
    
    \vspace{.3cm}
    
    \begin{subfigure}[t]{\linewidth}
        \centering
        \includegraphics[height=.19\linewidth]{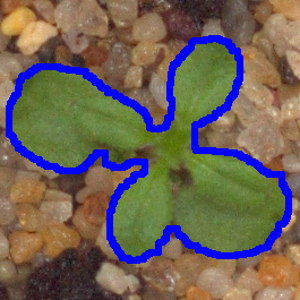}
        \includegraphics[height=.19\linewidth]{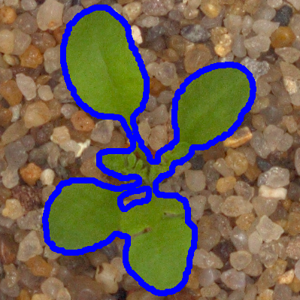}
        \includegraphics[height=.19\linewidth]{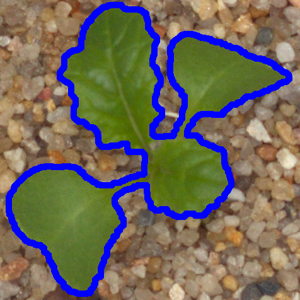}
        \includegraphics[height=.19\linewidth]{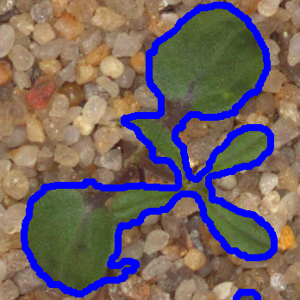}
        \includegraphics[height=.19\linewidth]{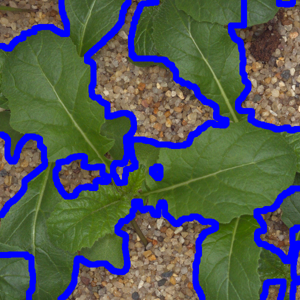}\\[5pt]
        \includegraphics[height=.19\linewidth]{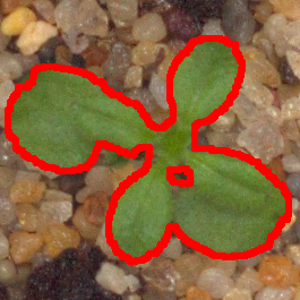}
        \includegraphics[height=.19\linewidth]{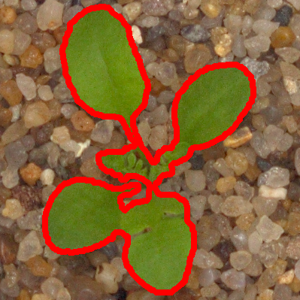}
        \includegraphics[height=.19\linewidth]{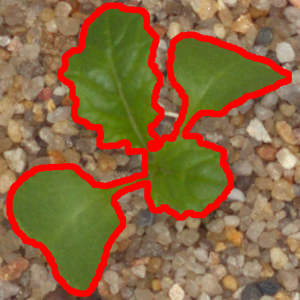}
        \includegraphics[height=.19\linewidth]{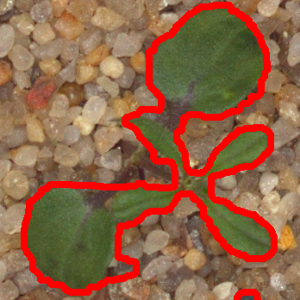}
        \includegraphics[height=.19\linewidth]{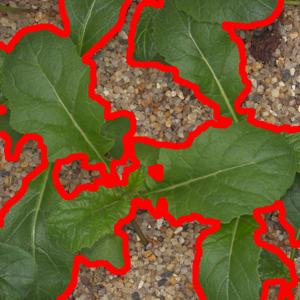}
        \caption{\cite{giselsson2017public}}
    \end{subfigure}
    
    \vspace{.3cm}
    
    \begin{subfigure}[t]{\linewidth}
        \centering
        \includegraphics[height=.185\linewidth]{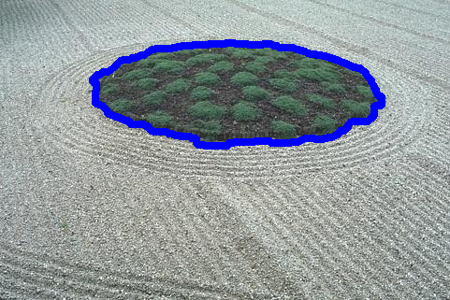}
        \includegraphics[height=.185\linewidth]{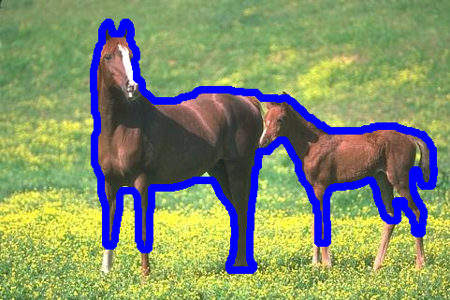}
        \includegraphics[height=.185\linewidth]{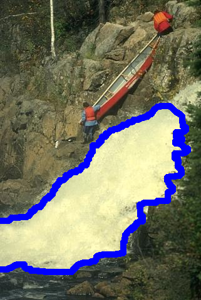}
        \includegraphics[height=.185\linewidth]{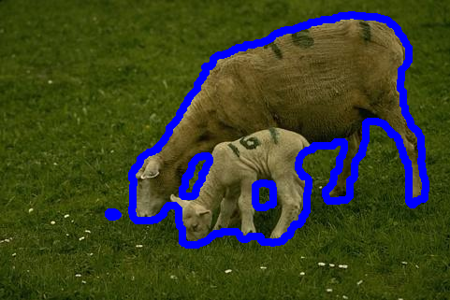}\\[5pt]
        \includegraphics[height=.185\linewidth]{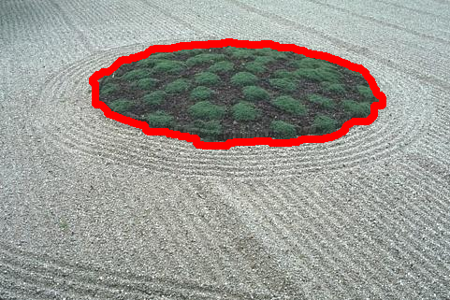}
        \includegraphics[height=.185\linewidth]{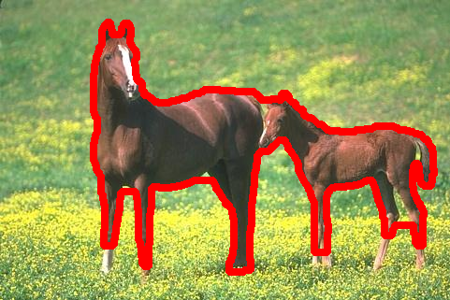}
        \includegraphics[height=.185\linewidth]{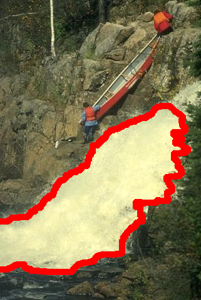}
        \includegraphics[height=.185\linewidth]{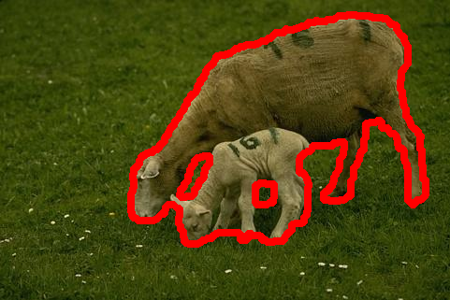}
        \caption{\cite{martin01}}
    \end{subfigure}
    \caption{Application our methods on real images from a variety of
      different datasets. The blue and red contours are the results of
      segmentation using appearance models estimated using the
      algebraic and spectral methods,
      respectively.} \label{fig:real_images}
\end{figure}

\clearpage

\section{Conclusion}

Many image segmentation algorithms rely on appearance models to
classify pixels into different regions.  We have shown that appearance
models can be estimated directly from an unsegmented image.  Our
approach is based on novel algebraic expressions that relate local
image statistics to the appearance models of different image
regions.  Our experiments demonstrate the algorithms we introduce in
this paper work well in a variety of settings.  The resulting
appearance models can be used to segment images of different types,
including textured images and other images where regions have complex
appearances.  These results also suggest that segmentation algorithms
can be improved by making use of second order pixel statistics.

Thus far we have focused on the problem of binary segmentation.
An interesting direction for future work would be to extend our
approach to images with more than two regions.  As discussed in
Section~\ref{sec:multi} it is possible to derive algebraic constraints
similar to the ones we have used to the case of images with (at most)
$m$ regions.  However, solving for appearance models using these
constraints remains a challenging computational problem.

\bibliographystyle{plain}
\bibliography{references}

\end{document}